%% file: main.tex
\documentclass[]{fairmeta}
\usepackage{microtype}
\usepackage{amsfonts}
\usepackage{graphicx}
\usepackage{booktabs}
\usepackage{wrapfig}
\usepackage{amsmath}
\usepackage{amsthm}
\usepackage{pifont}
\usepackage[capitalize,noabbrev]{cleveref}
\theoremstyle{plain}
\newtheorem{theorem}{Theorem}[section]
\newtheorem{proposition}[theorem]{Proposition}

\newtheorem{corollary}[theorem]{Corollary}
\theoremstyle{definition}
\newtheorem{definition}[theorem]{Definition}

\theoremstyle{remark}

\usepackage{enumitem}
\usepackage[subtle, mathdisplays=tight, charwidths=tight, leading=normal]{savetrees}
\usepackage{subcaption} 
\usepackage{booktabs} 
\usepackage{longtable} 
\usepackage{textcomp} 
\usepackage{xspace} 
\usepackage{hyperref} 

\newcommand{\Sys}{\textsc{Jackpot}\xspace}
\newcommand{\sys}{\textsc{Jackpot}\xspace}
\newcommand{\topk}{{Top}-$k$\xspace}
\newcommand{\qwentwob}{\textsc{Qwen3-1.7B-Base}\xspace}

\newcommand{\qweneightb}{\textsc{Qwen3-8B-Base}\xspace}

\usepackage[table]{xcolor} 

\usepackage{tikz} 

\usepackage{tikz} 
\usepackage{algorithm}
\usepackage{algorithmic}
\usepackage{adjustbox}

\title{Jackpot: Optimal Budgeted Rejection Sampling for Extreme Actor-Policy Mismatch Reinforcement Learning}

\definecolor{skyblue}{RGB}{135, 206, 235} 
\definecolor{palegreen}{RGB}{152, 251, 152}
\definecolor{lightpink}{RGB}{255, 220, 235}
\author{Zhuoming Chen$^*$, Hongyi Liu$^*$, Yang Zhou$^*$, Haizhong Zheng, Beidi Chen} 
\affiliation{Carnegie Mellon University}
\abstract{
Reinforcement learning (RL) for large language models (LLMs) remains expensive,
particularly because the rollout is expensive. Decoupling rollout generation from policy optimization  (e.g., leveraging a more efficient model to rollout) could enable 
substantial efficiency gains, yet doing so introduces severe distribution 
mismatch that destabilizes learning.  
We propose \sys, a framework that leverages Optimal Budget Rejection 
Sampling (OBRS) to directly reduce the discrepancy between the rollout model and 
the evolving policy. \sys integrates a principled OBRS procedure, a unified 
training objective that jointly updates the policy and rollout models, and an 
efficient system implementation enabled by top-$k$ probability estimation and 
batch-level bias correction.  
Our theoretical analysis shows that OBRS consistently moves the rollout 
distribution closer to the target distribution under a controllable acceptance 
budget. 
Empirically, \sys substantially improves training stability compared to 
importance-sampling baselines, achieving performance comparable to on-policy RL 
when training Qwen3-8B-Base for up to 300 update steps of batchsize 64. Taken together, our results show that OBRS-based alignment brings us a step 
closer to practical and effective decoupling of rollout generation from policy 
optimization for RL for LLMs. 
}

\metadata[Github]{\url{https://github.com/Infini-AI-Lab/jackpot}} 
\metadata[Website]{\url{https://infini-ai-lab.github.io/jpt_website/}} 

\begin{document}

\maketitle
\input{jackpot/introduction}

\input{jackpot/related} 
\input{jackpot/preliminary}

\input{jackpot/method}
\input{jackpot/empiricalstudies}
\input{jackpot/conclusion}

\clearpage
\newpage
\bibliographystyle{assets/plainnat}
\bibliography{paper}

\clearpage
\newpage
\beginappendix 
\input{jackpot/appendix}

\end{document}

%% file: jackpot/introduction.tex
\section{Introduction} 
\footnote{$^*$ indicates equal contribution, order decided by the lastnames} 
\vspace{-1em} 

Reinforcement learning (RL) has demonstrated substantial effectiveness in the post-training of large language models (LLMs), yielding significant improvements in domains such as mathematics~\citep{guo2025deepseek,azerbayev2023llemma}, coding~\citep{jimenez2023swe,ouyang2025kernelbench}, and other agentic tasks~\citep{liu2023agentbench,jin2025search}. Despite these successes, RL remains expensive~\citep{sheng2025hybridflow,fu2025areal,zheng2025act}, with the majority of the training cost, often  $80\%$~\citep{qin2025seer}, attributed to rollouts, during which LLMs auto-regressively generate trajectories. 

Many approaches have been explored to reduce rollout costs by improving hardware utilization through asynchronous training~\citep{zheng2025prosperity,fu2025areal,wu2025llamarl}, or through inference optimizations such as 8-bit quantization~\citep{liu2025flashrl}, request balancing~\citep{qin2025seer}, and speculative decoding~\citep{Atlas}. However, these methods all require the target LLM to actively participate in the rollout process to collect trajectories and signals, which ultimately limits their flexibility~\citep{kiran2021deep} and efficiency. This leads to a question:

\begin{tcolorbox}[myquote]
\itshape Is it possible to perform rollouts using a completely different model from the one we ultimately want to train?
\end{tcolorbox}

Ideally, switching to a smaller variant from the same model family should give huge efficiency gain. 
By prior work on test-time compute~\citep{brown2024large,sadhukhan2025kinetics}, even relatively small models can obtain non-zero rewards (i.e., training signals) on challenging problems by leveraging multiple attempts, a behavior that is inherently aligned with the standard RL training paradigm~\citep{guo2025deepseek}. 

\begin{figure}[h!]
    \centering
    \includegraphics[width=1.0\textwidth]{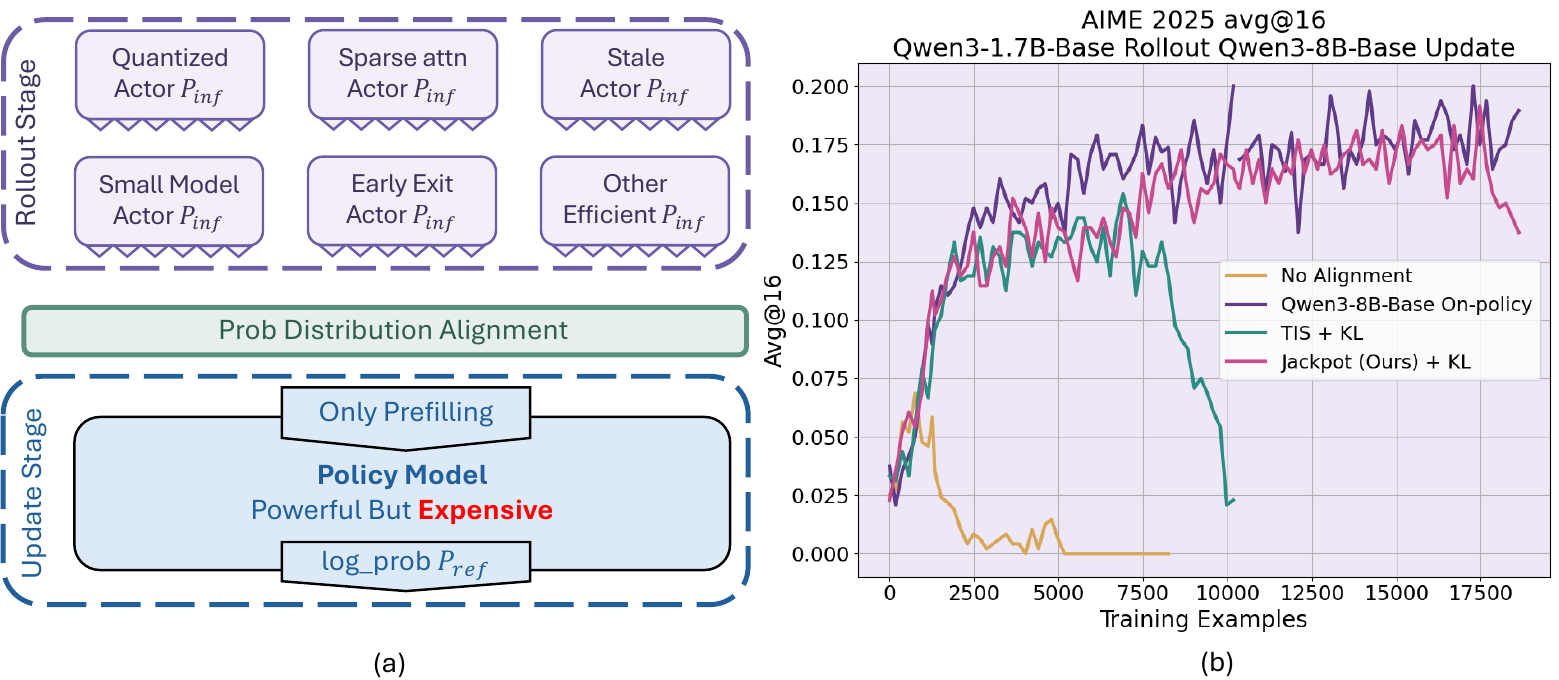} 
    \caption{(a) Off-policy RL training is desirable for alleviating the severely high cost from the rollout stage of on-policy RL. The strong but inference costly policy model can be approximated by cheap but crappy counterparts to speedup the rollout stage. However, the trajectories rollout by the actor model must be aligned in probability distribution with the policy model's distribution to make offpolicy training viable. In (b) we show an extreme off-policy case where we show training setting use a \qwentwob model training rollout to train a \qweneightb model policy. Without any alignment procedures, training collapses (pink). Prior method TIS (green) also shows a significant gap towards \qweneightb on-policy baseline (purple), while collapsing, using TIS sees KL divergence also violently increasing. Our proposed point \Sys provides much more stable training under the setting. } 
    \label{fig:rolloutvstrainingdiff}
\end{figure} 

Despite the availability of training signals, the decoupled RL training triggers a severe \textbf{distribution mismatch} between the rollout and trained models, which is known to severely affect the stability and convergence of RL~\citep{liu-li-2025-rl-collapse}, primarily due to inaccurate advantage estimates. Existing methods aim to mitigate this issue with various importance-sampling (IS) corrections~\citep{fu2025areal,wu2025llamarl,liu2025flashrl,zheng2025group,team2025every}. However, we find that the KL divergence between \emph{different} models is often an order of magnitude larger than that observed in asynchronous training or quantised variants, casting doubt on whether prior correction techniques remain effective under such a large mismatch.

Given the limitations of purely post-hoc corrections, it is desirable also to reduce the mismatch \emph{at the source}. Rejection sampling, which simulates a target distribution from an accessible proposal distribution, offers such a direct mechanism by selectively excluding undesired tokens from contributing to the backward pass and policy updates. It effectively narrows the distribution gap between the rollout and training models, naturally \emph{complementing} existing importance-sampling corrections.

However, applying rejection sampling in RL systems for LLMs poses three key technical challenges.  \ding{172} \textbf{Low Sample Efficiency.} 
Let $\boldsymbol{p}$ denote the trained model distribution and $\boldsymbol{q}$ denote the rollout model distribution. 
In standard rejection sampling, a token $i$ proposed from $\boldsymbol{q}$ is accepted with probability 
$\tfrac{p_i}{\lambda q_i}$, where the constant $\lambda$ must satisfy $\lambda \ge \max_{i} \tfrac{p_i}{q_i}$. 
For LLMs, which operate over vocabularies exceeding $100{,}000$ tokens, this requirement leads to an extremely large $\lambda$ because even small local differences between the two distributions can cause the ratio $\tfrac{p_i}{q_i}$ to spike on certain rare tokens. 
Consequently, the acceptance probability becomes vanishingly small for almost all proposed tokens.  \ding{173} \textbf{Widening Gap.} 
A naïve implementation of decoupled training keeps the rollout model fixed while the trained model continually improves. As learning progresses, the discrepancy between the two models increases, further exacerbating the distribution mismatch and potentially destabilising training. \ding{174} \textbf{Efficiency and System Support.} 
Current RL frameworks~\citep{sheng2025hybridflow,fu2025areal,vonwerra2022trl,hu2024openrlhf} assume that the rollout and training models are identical. It remains unclear how to efficiently support decoupled RL, especially when combined with new algorithms, objectives, or loss functions.

To tackle these challenges, we propose \sys. 
Our framework leverages optimal budget rejection sampling (OBRS)~\citep{verine2024optimal} as a relaxed alternative to classical rejection sampling. 
Although OBRS does not enforce exact equality between the actor and target policy distributions, it provides a provable guarantee that, for any specified rejection budget, the adjusted actor distribution becomes strictly closer to the target distribution than the original proposal. 
To prevent the distribution gap from widening as training progresses, we further apply a reverse-KL loss to progressively align the rollout model with the trained model. 
In addition, we develop an end-to-end RL system that efficiently supports decoupled training, incorporating several approximations to reduce the memory overhead introduced by OBRS, which otherwise requires accessing the full vocabulary when reweighting accepted tokens. Taken together, our empirical results show that \sys substantially improves the
stability of RL training compared to IS-based baselines, even enabling performance
comparable to on-policy training on \qweneightb for 300 training steps (\Cref{fig:rolloutvstrainingdiff}).

 We organize the remainder of this paper as follows.
\begin{itemize}[itemsep=0.0pt,topsep=0pt,leftmargin=*]
    \item In~\Cref{sec:related}, we formalize the notion of distribution mismatch between rollout and trained models, analyze its underlying causes across different training paradigms, and review relevant prior work.
    
    \item In~\Cref{sec:method}, we start from a brief introduction of optimal budget rejection sampling (\Cref{sec:concept}) and  demonstrate its effectiveness through both numerical simulations and empirical observations (\Cref{sec:numerical}) and present the theoretical optimality (\Cref{sec:theorem}).
    \item In~\Cref{sec:jackpot}, we present the full \sys framework, detailing three key components:  
    (i) the OBRS procedure (\Cref{subsec:reject_and_reweighting});  
    (ii) the formulation of the \sys training objective (\Cref{subsec:formulation}); and  
    (iii) an efficient system implementation enabled by \topk-based probability estimation and batch-level bias correction (\Cref{subsec:naive-concerns}).
    
    \item In~\Cref{sec:newexp}, we conduct experiments on mathematical reasoning tasks to evaluate \sys. Empirically, we show that \sys significantly improves the stability of RL training over IS-based baselines. We further perform ablations on each module in \sys and analyze its effectiveness in settings with smaller distribution gaps, such as large-batch training and KV-quantized rollout.
\end{itemize}

%% file: jackpot/related.tex
\section{Background}
\label{sec:related}
In this section, we first formalize the distribution mismatch problem that arises in RL for LLMs. We then review several strands of related work of \Sys. 

\subsection{Problem Setting: PPO Objective and Actor-Policy Distribution Mismatch} 

We begin with the clipped objective in PPO~\citep{schulman2017proximalpolicyoptimizationalgorithms}, whose expectation can be written as
\begin{equation}
    \mathcal{L}^{\text{{PPO}}}(\theta) 
    = \mathbb{E}_{x \sim P_{\text{inf}}}
    \Big[ \min \big( r_\theta(x)\,\hat{A}(x), 
    \operatorname{clip}(r_\theta(x), 1-\epsilon, 1+\epsilon)\,\hat{A}(x) \big) \Big] 
\end{equation} 
where $r_\theta(x) = {p_{\boldsymbol{\theta}_{\text{new}}}(x)}/{p_{\text{ref}}(x)}$ is the likelihood ratio between the updated policy $p_{\boldsymbol{\theta}_{\text{new}}}$ and the reference policy $p_{\text{ref}}$, and $\hat{A}(x)$ denotes the estimated advantage at decision $x$. $p_{\text{inf}}$ is the inference distribution used to generate rollouts, $p_{\text{ref}}$ is the reference policy distribution assumed in the objective, and $p_{\boldsymbol{\theta}_{\text{new}}}$ is the updated policy distribution. In the standard process, it is assumed that $p_{\text{inf}} = p_{\text{ref}}$, but in practice this assumption is often violated, leading to actor–policy distribution mismatch.

Distribution mismatch is common and arises for several reasons, such as minor discrepancies between the inference engine and the reference policy by FSDP engines, the use of stale or asynchronous data, or rollouts generated by approximated models (e.g., quantized, sparsified, or distilled). Such mismatches can destabilize training and therefore require additional mechanisms to correct or mitigate their impact.

\subsection{Related Work}
\textbf{RL for LLM.} Reinforcement learning has been widely applied to LLMs to improve human alignment, reasoning, coding, and other complex tasks. Beyond PPO, memory efficient methods have been proposed, including ReMax~\citep{li2023remax}, RLOO~\citep{ahmadian2024back}, and GRPO~\citep{shao2024deepseekmath}. In addition, methods such as SimPO~\citep{meng2024simpo} and DPO~\citep{rafailov2023direct}, which are based on offline RL, have also been employed for human alignment. RL training systems for LLMs, such as Verl~\citep{sheng2025hybridflow}, AReal~\citep{fu2025areal}, TRL~\citep{vonwerra2022trl}, and OpenRLHF~\citep{hu2024openrlhf}, have been developed to improve training throughput and scalability.

\textbf{Distribution Mismatch Correction in RL.} Actor (i.e., the rollout model) and policy (i.e., the trained model) mismatch is a common problem that has long been studied, e.g., \citet{espeholt2018impalascalabledistributeddeeprl}. To alleviate the actor-policy distribution gap, prior methods leverage importance sampling to approximate the true PPO objective.
\begin{equation}
    \mathcal{L}^{\text{PPO}}(\theta) 
    = \mathbb{E}_{x \sim p_{\text{inf}}} \Big[ \text{SG}\!\left(\colorbox{lightpink}{$\displaystyle\boldsymbol{F}(\tfrac{p_{\text{ref}}(x)}{p_{\text{inf}}(x)})$}\right)\min \big( r_\theta(x)\,\hat{A}(x), 
    \operatorname{clip}(r_\theta(x), 1-\epsilon, 1+\epsilon)\,\hat{A}(x) \big) \Big]
\end{equation} 

$\text{SG}$ means stop gradients. The choice of adjustment function $\boldsymbol{F}$ is one of the core focuses of prior work.  Areal~\citep{fu2025areal} uses an identical function $\boldsymbol{F}(x) = x$; Flash-RL and Llama-RL use $\boldsymbol{F}(x) = \min(x, C)$ (i.e., truncated importance sampling, TIS), where $C$ is a hyper-parameter;
IceProp~\citep{team2025every} uses a bi-directional truncation,
\[
\boldsymbol{F}(x) =
\begin{cases}
x, & \text{if } x \in [\alpha, \beta],\\[6pt]
0, & \text{otherwise}.
\end{cases}
\]

From system perspective, FP32 LM heads~\citep{liu2025flashrl} and deterministic LLM Inference~\citep{he2025nondeterminism} are implemented to mitigate the numerical issue of serving systems when rollout.

In this paper, we proposed \Sys.
Our method is \textbf{orthogonal} to the above prior works. We think about the problem of how to directly close the gap between  
$p_{\text{inf}}$, and $p_{\text{ref}}$. We will show that through optimal budget rejection sampling and reweighting of the output probabilities so that the
divergence between $p_{\text{inf}}$ and the target distributions $p_{\text{ref}}$ is provably
reduced. Moreover, techniques such as TIS can be applied on top of this improved distribution to further correct the remaining mismatch in a complementary way. 

%% file: jackpot/preliminary.tex
\section{Rejection Sampling
and Optimal Budget Rejection Sampling}
\label{sec:method}
In this section, we briefly introduce optimal budget importance sampling (OBRS) and discuss its theoretical guarantee, followed by our validation via numerical simulation. We further offer detailed analysis and proofs in Appendix~\ref{app:srs-analysis}. 

\subsection{Reduce Distribution Gap via Optimal Budget Importance Sampling}
\label{sec:concept}

We show how to directly modify $p_{\text{inf}}$ to close the distribution gap to $p_{\text{ref}}$ instead of the post-hoc importance sampling corrections.  Let $\boldsymbol{p}$ denote the target distribution we want to align with (e.g., the trained model distribution),  $\boldsymbol{q}$ denote the proposed distribution (e.g., the rollout model distribution), and $\tilde{\boldsymbol{q}}$ denote the \emph{post-rejection distribution} (i.e., the distribution of accepted tokens).

\begin{definition}[Rejection Sampling (RS)]
RS stochastically rejects tokens in the trajectories sampled with $p_{\text{inf}}$ based on the difference between the two distributions. In standard rejection sampling, a token $i$ proposed from $\boldsymbol{q}$ is accepted with probability
\(
\frac{p_i}{\lambda q_i},
\)
where the constant $\lambda$ must satisfy
\(
\lambda \ge \max_i \frac{p_i}{q_i}.
\) Therefore, we have \(
\tilde{q}_i \;\propto\; q_i \cdot \frac{p_i}{\lambda q_i}.
\)
Then
\(
\tilde{q}_i = p_i,
\)
after normalization. Therefore, rejection sampling transforms samples from $\boldsymbol{q}$ into exact samples from the target distribution $\boldsymbol{p}$.
\end{definition}



While standard rejection sampling can, in principle, perfectly align two distributions, its direct application in our setting is impractical. 
In high-dimensional discrete spaces such as LLM vocabularies---often exceeding $100{,}000$ tokens ---even minor local discrepancies between the rollout and target distributions can cause the likelihood ratio $\tfrac{p_i}{q_i}$ to spike for rare tokens. 
This forces the normalizing constant $\lambda$ to become extremely large, which, in turn, drives the acceptance rate to near zero, resulting in almost all proposed tokens being rejected.

To overcome this, we adopt the principled approach of \textbf{Optimal Budgeted Rejection Sampling (OBRS)} \citep{verine2024optimal}. This technique reframes the problem: instead of demanding perfect adherence to the target distribution at the cost of sample efficiency, it seeks the optimal rejection rule that, for a given target acceptance rate (a ``budget"), produces a distribution as close as possible to the target. This is precisely the trade-off our problem requires.

\begin{definition}[Optimal Budget Rejection Sampling (OBRS)~\citep{verine2024optimal}]
Instead of using the dominating constant $\lambda = \max_i \tfrac{p_i}{q_i}$, OBRS selects a smaller user-specified parameter $\lambda > 0$ that reflects the desired rejection budget. For a proposed token $i \sim \boldsymbol{q}$, OBRS accepts it with probability
\(
a_i \;=\; \min\!\left(1,\; \frac{p_i}{\lambda q_i}\right).
\)
The resulting post-rejection distribution is therefore
\(
\tilde{q}_i \;\propto\; q_i \cdot a_i
\)
\end{definition}
While the resulting distribution of OBRS does not generally equal $\boldsymbol{p}$ but is guaranteed to be \emph{closer} to $\boldsymbol{p}$ than the original proposal distribution for any $\lambda$. 
Thus, OBRS provides a controllable trade-off between acceptance rate and distribution alignment, avoiding the vanishing acceptance rates that arise in high-dimensional settings.


\subsection{Numerical Simulation}
\label{sec:numerical}
\begin{figure}[t] 
  \centering 
  \includegraphics[width=\linewidth]{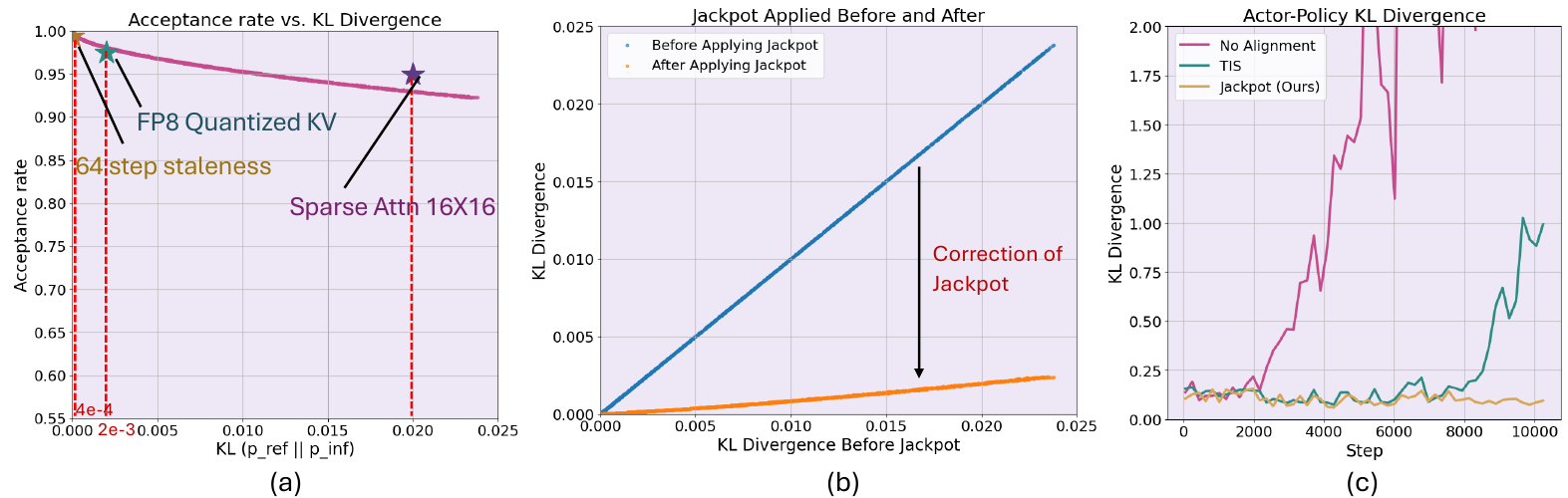} 
  \caption{We conducted numerical experiments in (a) and (b), where we simulate the LLMs' output distribution with randomly generated Dirichlet distribution with controllable noise to attain different levels of KL divergence. (a) plots the simulated \Sys acceptance rate across different pairs of actor-policy distributions. While overal trend sees acceptance rate slowly decreasing as the distributions move further apart, the acceptance rate remains high (\(>90\%\)) throughout the spectrum. For reference, we also marks the actor-policy gap of different common seen off-policy settings. In (b), we show that \Sys significantly shrinks the KL divergence between the target distribution versus the applied distribution in our simulations, sometimes by an order of magnitude. In (c), our proposed method, \Sys (yellow) maintains small KL divergence between actor and policy model probability distribution, while without alginment and TIS both seen KL divergence explosively rise up as the training continues. } 
  \label{fig:obrs}
\end{figure} 

We validate the effectiveness of OBRS via numerical simulation in~\Cref{fig:obrs}. Crucially, this calibration is highly efficient; the acceptance rate remains high even when there is a large initial KL divergence. The impact on distributional alignment is dramatic: a significant reduction in KL divergence is observed with high acceptance rates. By systematically damping the most extreme probability ratios, OBRS produces a distribution that is not only provably closer to the on-policy target but also primed to yield more stable and effective PPO/GRPO policy updates.

 \subsection{Theoretical Guarantees}
\label{sec:theorem}
To justify the use of OBRS within our framework, we establish its fundamental theoretical properties, which formally characterize its ability to reduce distribution mismatch. OBRS possesses proven optimality. We provided our proofs in Appendix~\ref{app:srs-analysis}.

\begin{theorem}[OBRS Improves Distribution Alignment]
Let $\boldsymbol{p}$ be the target distribution and $\boldsymbol{q}$ be the proposal distribution.
For any $\lambda > 0$, define the OBRS acceptance rule
\[
a_i = \min\!\left(1, \frac{p_i}{\lambda q_i}\right),
\qquad 
\tilde{q}_i \;\propto\; q_i a_i .
\]
Then the post-rejection distribution $\tilde{\boldsymbol{q}}$ is strictly closer to $\boldsymbol{p}$ than the original proposal $\boldsymbol{q}$ in the sense that
\[
D_{\mathrm{KL}}(\boldsymbol{p} \,\|\, \tilde{\boldsymbol{q}} )
\; \le\;
D_{\mathrm{KL}}(\boldsymbol{p} \,\|\, \boldsymbol{q} ),
\]
whenever $\lambda < \max_i \frac{p_i}{q_i}$.
\end{theorem}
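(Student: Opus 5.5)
We compute $D_{\mathrm{KL}}(\boldsymbol{p}\,\|\,\tilde{\boldsymbol{q}})$ in closed form as a function of $\lambda$ and show it is monotone. Write $r_i = p_i/q_i$ and let $Z(\lambda)=\sum_i q_i a_i = \sum_i \min(q_i, p_i/\lambda)$ be the acceptance rate. Then $\tilde q_i = \min(q_i, p_i/\lambda)/Z(\lambda)$.

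Split the support of $\boldsymbol{p}$ into $A_\lambda=\{i: r_i \ge \lambda\}$, where $\tilde q_i = p_i/(\lambda Z)$, and its complement, where $\tilde q_i = q_i/Z$. This gives
\[
D_{\mathrm{KL}}(\boldsymbol{p}\,\|\,\tilde{\boldsymbol{q}})
= \log Z(\lambda) + \sum_{i\in A_\lambda} p_i \log\lambda + \sum_{i\notin A_\lambda} p_i \log r_i .
\]
Define $F(\lambda)=\log Z(\lambda) + \sum_i p_i \log \min(\lambda, r_i)$. This expression is consistent with both endpoints of the parameter range:
\begin{itemize}
\item For $\lambda \ge \lambda^\ast := \max_i r_i$ we have $Z=1/\lambda$ and $\min(\lambda,r_i)=r_i$, so $F = -\log\lambda + \log\lambda + \dots$ reduces to $D_{\mathrm{KL}}(\boldsymbol{p}\,\|\,\boldsymbol{p})=0$.
\item As $\lambda \to 0$ (so $\lambda \le \min_i r_i$), $Z=1$ and $\tilde{\boldsymbol{q}}=\boldsymbol{q}$, so $F = D_{\mathrm{KL}}(\boldsymbol{p}\,\|\,\boldsymbol{q})$.
\end{itemize}

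It therefore suffices to show that $F$ is nonincreasing in $\lambda$ on $(0,\lambda^\ast)$. $F$ is continuous and piecewise smooth, since $Z$ is piecewise of the form $c_1 + c_2/\lambda$. On any interval avoiding the breakpoints $r_i$,
\[
F'(\lambda) = \frac{Z'(\lambda)}{Z(\lambda)} + \frac{P(A_\lambda)}{\lambda},
\qquad
Z'(\lambda) = -\frac{P(A_\lambda)}{\lambda^2},
\]
where $P(A_\lambda)=\sum_{i\in A_\lambda}p_i$. Hence
\[
F'(\lambda) = \frac{P(A_\lambda)}{\lambda}\Big(1 - \frac{1}{\lambda Z(\lambda)}\Big).
\]
The sign of $F'$ is therefore governed by whether $\lambda Z(\lambda)\le 1$. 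This holds because
\[
\lambda Z(\lambda) = \sum_i \min(\lambda q_i, p_i) \le \sum_i p_i = 1 .
\]
So $F'\le 0$ wherever it exists. By continuity of $F$, it follows that $D_{\mathrm{KL}}(\boldsymbol{p}\,\|\,\tilde{\boldsymbol{q}}) = F(\lambda) \le \lim_{\mu\to 0}F(\mu) = D_{\mathrm{KL}}(\boldsymbol{p}\,\|\,\boldsymbol{q})$.

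For the strict version, note that for $\lambda<\lambda^\ast$ the set $A_\lambda$ is nonempty, so $P(A_\lambda)>0$. Moreover $\lambda Z<1$ unless $\lambda q_i\ge p_i$ for all $i$, which would mean $\lambda \ge \lambda^\ast$. Hence $F'<0$ at least on a neighborhood just below $\lambda^\ast$, and $F$ is strictly below its value at $0$ whenever $\boldsymbol{p}\ne\boldsymbol{q}$ and $\lambda>\min_i r_i$.

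The main obstacle I expect is bookkeeping on supports. Tokens with $q_i=0<p_i$ make both sides infinite, so I would assume $\mathrm{supp}\,\boldsymbol{p}\subseteq\mathrm{supp}\,\boldsymbol{q}$. Tokens with $p_i=0$ contribute to neither the KL nor $P(A_\lambda)$, but they do enter $Z$ through $\min(q_i,0)=0$, which is harmless.

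I also need to handle the non-differentiable breakpoints, using continuity of $F$ and the fact that a continuous piecewise-$C^1$ function with nonpositive derivative is monotone. Finally, the statement's strict "closer" must be read as $\le$, with strict inequality only for $\lambda$ above $\min_i r_i$. For $\lambda\le\min_i r_i$ the map is the identity and equality holds, so I would note this caveat explicitly.
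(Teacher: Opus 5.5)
Your route is the paper's (Proposition~\ref{prop:kl-monotonicity} and its corollary): write the KL as a function of the budget parameter, differentiate on intervals where the partition by $r_i$ is fixed, show the derivative is nonpositive, and compare with the $\lambda\to0$ limit. Your derivative is more careful than the paper's. The appendix asserts that only $Z_C$ depends on $C$ and obtains $G'(C)=Z_C'/Z_C$, which drops the derivative of the $\log(p_t/C)$ terms. The correct derivative is your $\frac{P(A_\lambda)}{\lambda}(1-\frac{1}{\lambda Z(\lambda)})$, and its sign genuinely needs your inequality $\lambda Z(\lambda)=\sum_i\min(\lambda q_i,p_i)\le 1$, which the paper never states.

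However, your partition is backwards, and the central formula is wrong as written. If $r_i\ge\lambda$ then $p_i/\lambda\ge q_i$, so $a_i=1$ and $\tilde q_i=q_i/Z$. The thinned tokens, where $\tilde q_i=p_i/(\lambda Z)$, are those with $r_i<\lambda$. Since $p_i/\min(q_i,p_i/\lambda)=\max(r_i,\lambda)$, the correct closed form is $D_{\mathrm{KL}}(\boldsymbol p\,\|\,\tilde{\boldsymbol q})=\log Z(\lambda)+\sum_i p_i\log\max(\lambda,r_i)$, with $\max$ rather than $\min$.

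Your $\min$ version fails in three ways:
\begin{itemize}
\item It is not the KL. For $\boldsymbol p=(0.8,0.2)$, $\boldsymbol q=(0.5,0.5)$, $\lambda=1$ it equals $\log 0.7+0.2\log 0.4\approx-0.54$, while the true value is about $0.019$.
\item It tends to $-\infty$ as $\lambda\to0$.
\item Your first endpoint bullet does not follow from it. Using $\min(\lambda,r_i)=r_i$ gives $D_{\mathrm{KL}}(\boldsymbol p\,\|\,\boldsymbol q)-\log\lambda$, not $0$; the value $0$ comes from $\max(\lambda,r_i)=\lambda$.
\end{itemize}
Moreover, with your stated $A_\lambda=\{r_i\ge\lambda\}$ one actually has $Z'=-P(A_\lambda^c)/\lambda^2$, so your factorization of $F'$ would not hold. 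The repair is to set $A_\lambda=\{i: r_i<\lambda\}$ and use $\max$. Then your formulas for $Z'$ and $F'$, and the sign argument via $\lambda Z\le1$, go through verbatim and prove the stated inequality.

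Two knock-on corrections follow. First, with the corrected set, $P(A_\lambda)>0$ holds iff $\lambda>\min_{i:p_i>0}r_i$, not merely whenever $\lambda<\lambda^\ast$. So the justification in your strictness paragraph must change; your closing caveat, with the minimum taken over $\mathrm{supp}\,\boldsymbol p$, is the right condition. Second, tokens with $p_i=0<q_i$ are rejected for every $\lambda>0$. Hence $\tilde{\boldsymbol q}\to\boldsymbol q$ as $\lambda\to0$ only if $Q_0:=\sum_{i:p_i>0}q_i=1$. In general $\lim_{\lambda\to0}F(\lambda)=D_{\mathrm{KL}}(\boldsymbol p\,\|\,\boldsymbol q)+\log Q_0\le D_{\mathrm{KL}}(\boldsymbol p\,\|\,\boldsymbol q)$. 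That still suffices for the theorem, but the final equality in your chain should be an inequality; the paper's corollary makes the same slip.
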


In other words, OBRS always moves the proposal distribution toward the target distribution under any nontrivial rejection budget.

\begin{theorem}[Optimality of OBRS under a Fixed Acceptance Budget]
For any desired average acceptance rate $\bar{a} \in (0,1]$, there exists a unique scaling factor $\lambda > 0$ such that the OBRS acceptance rule
\[
a_i = \min\!\left(1, \frac{p_i}{\lambda q_i}\right)
\]
achieves the exact acceptance budget:
\[
\sum_i q_i a_i = \bar{a}.
\]
Moreover, among all acceptance rules $a_i \in [0,1]$ that satisfy this constraint, the OBRS rule is the unique minimizer of the divergence to the target distribution:
\[
\tilde{\boldsymbol{q}} 
= \arg\min_{\hat{\boldsymbol{q}}}
D_{\mathrm{KL}}(\boldsymbol{p} \,\|\, \hat{\boldsymbol{q}}),
\quad \text{s.t.}\quad
\hat{q}_i \propto q_i a_i,\;
\sum_i q_i a_i = \bar{a}.
\]
\end{theorem}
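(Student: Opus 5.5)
The plan is to turn the optimization over acceptance rules into a convex program over the unnormalized accepted masses $u_i := q_i a_i$. The acceptance budget then becomes the linear constraint $\sum_i u_i = \bar a$, and the box constraint $a_i\in[0,1]$ becomes $0\le u_i\le q_i$. Since $\hat q_i = u_i/\bar a$, we get
\[
D_{\mathrm{KL}}(\boldsymbol p\,\|\,\hat{\boldsymbol q}) = \sum_i p_i\log p_i \;-\; \sum_i p_i \log u_i \;+\; \log \bar a .
\]
With $\bar a$ fixed, minimizing the divergence is therefore equivalent to maximizing $\Phi(u)=\sum_{i:\,p_i>0} p_i\log u_i$ over the polytope $\{0\le u_i\le q_i,\ \sum_i u_i=\bar a\}$. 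The function $\Phi$ is strictly concave in the coordinates with $p_i>0$, and the feasible set is compact and convex. I will carry out the two parts of the statement in this order.

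\textbf{Existence of $\lambda$.} Define $g(\lambda)=\sum_i \min(q_i, p_i/\lambda)$, which is the acceptance rate of the OBRS rule with parameter $\lambda$. It is continuous and nonincreasing in $\lambda$. As $\lambda\to\infty$ it tends to $0$, and for $\lambda\le \min_{i:p_i>0} p_i/q_i$ it equals $q(\mathrm{supp}\,\boldsymbol p)$. This reveals a caveat that I would state explicitly: OBRS never accepts tokens with $p_i=0$. Hence the statement needs $\bar a\le q(\mathrm{supp}\,\boldsymbol p)$, which holds automatically when $\mathrm{supp}\,\boldsymbol q\subseteq\mathrm{supp}\,\boldsymbol p$. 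Under this assumption the intermediate value theorem gives some $\lambda$ with $g(\lambda)=\bar a$.

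\textbf{Uniqueness of $\lambda$.} On the region where at least one term is unsaturated, i.e.\ $p_i/\lambda<q_i$ for some $i$, the function $g$ is strictly decreasing, so $\lambda$ is unique whenever $\bar a<q(\mathrm{supp}\,\boldsymbol p)$. In the boundary case, for example $\bar a=1$ with full support, every $\lambda\le\min_i p_i/q_i$ works. These values all induce the same rule $a_i\equiv 1$, so uniqueness should be read as uniqueness of the resulting acceptance rule, or of the largest admissible $\lambda$.

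\textbf{Optimality.} I would write the KKT conditions for maximizing $\Phi$ subject to the budget constraint (multiplier $\mu$) and the upper bounds $u_i\le q_i$ (multipliers $\nu_i\ge 0$). The lower bounds are never active on $\mathrm{supp}\,\boldsymbol p$, because $\log u_i\to-\infty$ there. Stationarity reads $p_i/u_i=\mu+\nu_i$, and complementary slackness gives two cases:
\begin{itemize}
\item if $u_i<q_i$, then $\nu_i=0$ and $u_i=p_i/\mu$;
\item if $u_i=q_i$, then $p_i/q_i\ge\mu$.
\end{itemize}
Together these give $u_i=\min(q_i,p_i/\mu)$, which is exactly the OBRS rule with $\lambda=\mu$. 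For $p_i=0$ we take $u_i=0$: any mass placed there contributes nothing to $\Phi$ and takes budget away from coordinates where $\Phi$ is strictly increasing, provided the budget constraint binds on the support. Because the problem is concave with linear constraints, the KKT conditions are sufficient, so the OBRS solution is a global maximizer. Strict concavity of $\Phi$ on $\mathrm{supp}\,\boldsymbol p$ makes the maximizer unique there, and the strict-increase argument rules out alternative optima that put mass off the support.

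\textbf{Main obstacle.} I expect the hard part to be the uniqueness claims, not the KKT computation. The degenerate cases are where care is needed: tokens with $p_i=0$, budgets at the boundary $q(\mathrm{supp}\,\boldsymbol p)$, and non-uniqueness of $\lambda$ on the flat part of $g$. My approach would be to first state the support assumption explicitly. I would then prove uniqueness of the optimal $u$, i.e.\ of $\hat{\boldsymbol q}$, by strict concavity together with the off-support argument above. Uniqueness of $\lambda$ would be derived only on the strictly decreasing range of $g$.
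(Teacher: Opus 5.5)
Your proposal is correct and is essentially the paper's argument: you use the same decomposition of $D_{\mathrm{KL}}$ into a constant minus $\sum_i p_i\log a_i$ (your substitution $u_i=q_ia_i$ is only a coordinatewise rescaling), the same KKT case analysis yielding the cap $\min(1,\cdot)$, and the same appeal to strict concavity for uniqueness. Your additional steps are the intermediate-value and monotonicity argument for $\lambda$, the correct observation that $\lambda$ is unique only when $\bar a$ is attained on the strictly decreasing part of $g$ (so not at $\bar a=1$, where only the induced rule $a_i\equiv 1$ is unique), and the explicit treatment of tokens with $p_i=0$; these fill in what the paper merely asserts, and the one tacit assumption you share with the paper is $q_i>0$ whenever $p_i>0$, without which every feasible rule has infinite divergence and the minimizer is not unique.
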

Thus, OBRS is provably optimal for aligning $\boldsymbol{q}$ toward $\boldsymbol{p}$ under any specified rejection budget.


The above properties of OBRS are also explored in~\citet{verine2024optimal}. This guarantee ensures we are using the provably best method for trading sample efficiency for distributional accuracy.

%% file: jackpot/method.tex
\section{Jackpot: Design and Methodology} 
\label{sec:jackpot} 
In this section, we present details on the design considerations of \Sys. We show how we apply OBRS in RL, i.e., the token rejection criteria and reweighting procedures in~\Cref{subsec:reject_and_reweighting}, the formulation of \Sys objective in~\Cref{subsec:formulation,subsec:whichpolicy}, and our memory optimization and efficient implementation in~\Cref{subsec:naive-concerns}. 




\subsection{OBRS Procedure} 
\label{subsec:reject_and_reweighting}

To reduce the mismatch between the inference distribution $p_{\text{inf}}$ and the target distribution $p_{\text{target}}$ 
(which may refer to either the reference policy $p_{\text{ref}}$ or the updated policy $p_{\boldsymbol{\theta}_{\text{new}}}$; see~\Cref{subsec:whichpolicy} for details), 
we adopt a rejection rule analogous to~\citet{leviathan2023fast}. 
For a token $x$ sampled from $p_{\text{inf}}$, we accept it with probability
\begin{equation}
\label{eq:obrs_accept}
a(x)
= 
\min\!\left(1, \frac{p_{\text{target}}(x)}{\lambda\, p_{\text{inf}}(x)}\right),
\end{equation}
where $\lambda > 0$ controls the overall rejection budget.

Equation~\eqref{eq:obrs_accept} specifies the conditional probability 
\[
P(x \text{ accepted} \mid x \sim p_{\text{inf}}).
\]
Tokens that are rejected are masked out and excluded from the loss computation and gradient propagation. 
The acceptance rule induces a new (post-rejection) distribution over tokens:
\begin{equation}
\label{eq:obrs_dist}
P_{\text{OBRS}}(x)
=
\frac{
    p_{\text{inf}}(x)\, a(x)
}{
    \sum_{x'} p_{\text{inf}}(x')\, a(x')
}
=
\frac{
    \min\!\left(p_{\text{inf}}(x), \tfrac{p_{\text{target}}(x)}{\lambda}\right)
}{ Z
}.
\end{equation}
where $Z = \sum_{x'} \min\!\left(p_{\text{inf}}(x'), \tfrac{p_{\text{target}}(x')}{\lambda}\right)$.

Importantly, $P_{\text{OBRS}}$ represents the \emph{true} probability distribution governing the accepted samples. 
Therefore, in subsequent training, we reweight all accepted tokens according to $P_{\text{OBRS}}$, ensuring that their contributions to the loss and gradient faithfully reflect their probabilities under the adjusted distribution. We visualize the process in~\Cref{fig:placeholder}.

\subsection{Formulations of the \Sys Objective}
\label{subsec:formulation}

\Sys optimizes three objectives during training:  
(i) an OBRS-adjusted RL loss for the trained model (i.e., policy) model~$\theta$,  
(ii) a standard PPO loss for the rollout model (i.e., actor)~$\omega$, and  
(iii) an on-policy distillation loss that keeps the rollout model aligned with the improving policy.  
We now describe each component.

\paragraph{(1) OBRS-adjusted RL loss for the policy model.}
Following Section~\ref{subsec:reject_and_reweighting}, tokens are sampled from 
the rollout inference distribution $p_{\text{inf}}$, but each sampled token $x$
is accepted or rejected via the OBRS acceptance probability
\[
a(x)
=
\min\!\left(1,\frac{p_{\text{target}}(x)}{\lambda\, p_{\text{inf}}(x)}\right),
\qquad
\mathrm{Mask}(x) \sim \mathrm{Bernoulli}(a(x)).
\]
Accepted tokens ($\mathrm{Mask}(x)=1$) define the OBRS-adjusted distribution
\[
p'_{\text{inf}}(x)
=
\frac{
    p_{\text{inf}}(x)\, a(x)
}{
    \sum_{x'} p_{\text{inf}}(x')\, a(x')
}.
\]

The original PPO objective in Section~\ref{sec:related},
\[
\mathcal{L}^{\text{PPO}}(\theta)
=
\mathbb{E}_{x \sim p_{\text{inf}}}
\left[
    \mathrm{SG}\!\left(
        \boldsymbol{F}\!\left(\tfrac{p_{\text{ref}}(x)}{p_{\text{inf}}(x)}\right)
    \right)
    \min\!\big(
        r_\theta(x)\hat{A}(x),\,
        \operatorname{clip}(r_\theta(x),1-\epsilon,1+\epsilon)\hat{A}(x)
    \big)
\right],
\]
becomes the OBRS-aware PPO objective:
\definecolor{lightblue}{RGB}{210,235,255}
\begin{align}
\label{eq:obrs_ppo}
\mathcal{L}^{\text{PPO-OBRS}}(\theta)
=
\mathbb{E}_{x \sim p_{\text{inf}}}
\Big[
    \colorbox{lightblue}{$\mathrm{Mask}(x)$}\,
    \mathrm{SG}\!\left(
        \boldsymbol{F}\!\left(
            \frac{p_{\text{target}}(x)}{
                \colorbox{lightblue}{$p'_{\text{inf}}(x)$}
            }
        \right)
        \frac{p_{\text{ref}}(x)}{p_{\text{target}}(x)}
    \right)
    \nonumber \\
    \qquad\qquad
    \min\!\big(
        r_\theta(x)\,\hat{A}(x),\,
        \operatorname{clip}(r_\theta(x),1-\epsilon,1+\epsilon)\,\hat{A}(x)
    \big)
\Big],
\end{align}
where rejected samples are removed by $\mathrm{Mask}(x)$, and accepted samples
are reweighted according to $p'_{\text{inf}}(x)$.  The function 
$\boldsymbol{F}$ is the truncated IS correction 
(\Cref{sec:related}), and the choice of $p_{\text{target}}$ is discussed in 
\Cref{subsec:whichpolicy} (Empirically, we can use reference policy $p_{\text{ref}}$ or the updated policy $p_{\boldsymbol{\theta}_{\text{new}}}$ as the $p_{\text{target}}(x)$ for distribution alignment.)  We present the details of this part in~\Cref{alg:jackpot}. 

\paragraph{(2) Standard PPO loss for the rollout model.}
The rollout model~$\omega$ is also optimized using PPO, but without OBRS:
\begin{equation}
\label{eq:ppo_rollout}
\mathcal{L}^{\text{PPO}}(\omega)
=
\mathbb{E}_{x \sim p_{\text{inf}}}
\Big[
    \mathrm{SG}\!\left(
        \boldsymbol{F}\!\left(\tfrac{p_{\omega,\text{ref}}(x)}{p_{\text{inf}}(x)}\right)
    \right)
    \min\!\big(
        r_\omega(x)\,\hat{A}(x),\,
        \operatorname{clip}(r_\omega(x),1-\epsilon,1+\epsilon)\,\hat{A}(x)
    \big)
\Big],
\end{equation}
where $p_{\omega,\text{ref}}$ and $r_\omega$ explicitly denote quantities
computed from the rollout model.

\paragraph{(3) On-policy distillation loss for the rollout model.}
To ensure the rollout model tracks the improving policy, we apply a forward KL
distillation loss using the same sampled trajectories:
\begin{equation}
\label{eq:distill_loss}
\mathcal{L}^{\text{distill}}(\omega)
=
\mathbb{E}_{x \sim p_{\text{inf}}}
\left[
    D_{\mathrm{KL}}\!\left(
        \mathrm{SG}(p_{\theta_{\text{new}}}(x))
        \,\big\|\,
        p_{\omega}(x)
    \right)
\right].
\end{equation}
After optimization, the updated rollout distribution $p_{\omega}$ is used as 
the inference distribution $p_{\text{inf}}$ for the next training iteration, if we do not consider the numerical precision differences between the rollout engine and the training engine~\citep{he2025nondeterminism}.

\paragraph{Joint objective.}
Combining the three components above, the overall \Sys training objective is
\begin{equation}
\label{eq:sys_joint_obj}
\mathcal{L}^{\Sys}(\theta, \omega)
=
\underbrace{\mathcal{L}^{\text{PPO-OBRS}}(\theta)}_{\text{policy RL}}
\;+
\underbrace{\mathcal{L}^{\text{PPO}}(\omega)}_{\text{rollout RL}}
\;+\;
\lambda_{\text{distill}}\,
\underbrace{\mathcal{L}^{\text{distill}}(\omega)}_{\text{on-policy distillation}}
\end{equation}
where  $\lambda_{\text{distill}} \ge 0$ are
hyperparameters that balance the contributions of the rollout RL loss and the
distillation loss.



\begin{figure*}[t] 
    \centering
    \includegraphics[width=\linewidth]{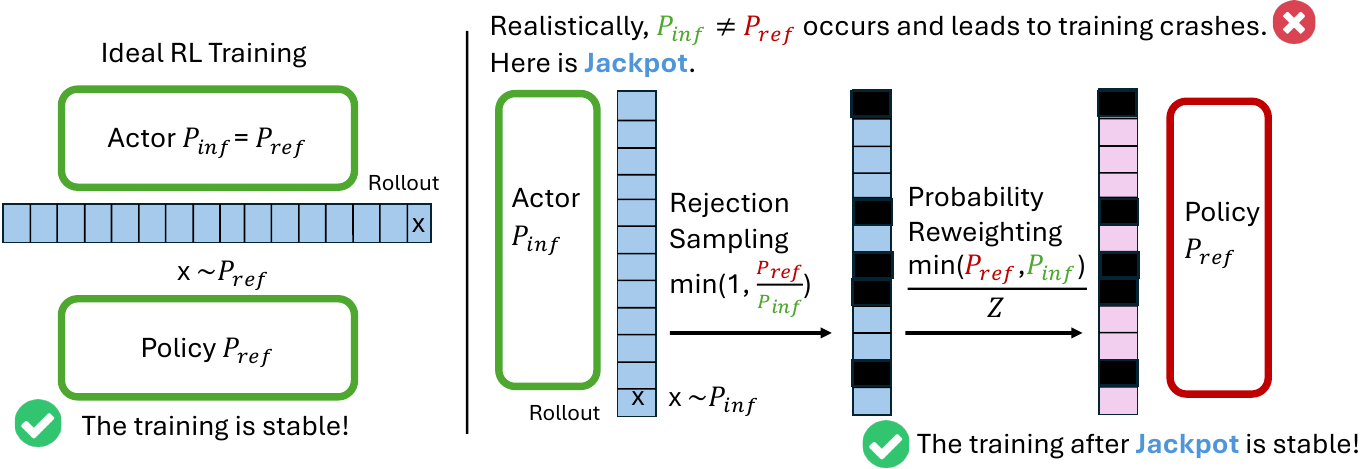}  
    \caption{Illustration of \Sys Pipeline focusing on Optimal Budgeted Rejection Sampling (OBRS) and Reweighting Procedures}
    \label{fig:placeholder}
\end{figure*}

\subsection{Which policy to approximate?} 
\label{subsec:whichpolicy}

OBRS allows us to align the inference distribution $p_{\text{inf}}$ toward 
\emph{any} chosen target distribution $p_{\text{target}}$.  
This flexibility raises a natural question: 
\emph{which policy distribution should we approximate during training?}

A straightforward choice is the reference policy $p_{\text{ref}}$, which is 
consistent with the formulation of standard PPO and provides a stable 
trust-region anchor.  
However, we empirically find that in training regimes where policy staleness 
becomes a major issue, such as large-batch RL or asynchronous 
rollout-update pipelines, the reference policy may drift too far from the 
current updated policy.  
In such cases, aligning $p_{\text{inf}}$ toward the latest policy 
$p_{\text{new}}$ yields better performance, as the trust region defined by 
$p_{\text{ref}}$ is too outdated to offer meaningful guidance.

Therefore, \Sys offers users the flexibility to choose 
$p_{\text{target}} = p_{\text{ref}}$ or $p_{\text{target}} = p_{\text{new}}$,  depending on the requirement in their training setup.

\subsection{Memory Optimization and Efficiency}
\label{subsec:naive-concerns}
Implementing \Sys directly faces a huge challenge of computational feasibility. Note that the weight's normalization constant, $Z$~\Cref{subsec:reject_and_reweighting}, requires a sum over the entire vocabulary ($|\mathcal{V}| > 100,000$), creating a crippling memory bottleneck from storing full logit vectors (\texttt{batch\_size} $\times$ \texttt{seq\_len} $\times$ \texttt{vocab\_size}). This severely restricts batch sizes, directly undermining the efficiency OBRS is intended to provide.  Therefore, transforming this principled approach into a large-scale RL system requires non-trivial engineering: we must introduce mechanisms to both bound the importance weights for stability and develop a computationally efficient, low-bias estimator for the normalization constant. To overcome the computational bottleneck of calculating \(Z\), we employ a top-k approximation and then empirically debias it. We present the details in~\Cref{alg:jackpot}.

\subsubsection{Top-k Approximation} 
\label{subsec:z-approximation} 
The probability mass of language models is typically concentrated in a small subset of the vocabulary. We leverage this property by approximating the sum over \(\mathcal{V}\) with a sum over a much smaller set, \(\mathcal{V}_k\), which contains the most likely tokens from both the inference and current policies. Specifically, let \(\text{top-k}(p)\) be the set of \(k\) tokens with the highest probability under distribution \(p\). We define our approximation set as the union:
$$
\mathcal{V}_k = \text{top-k}(p_{\text{inf}}) \cup \text{top-k}(p_{\boldsymbol{\theta}_{\text{new}}})
$$
The union is crucial because a token might be highly probable under one distribution but not the other, and the \(\min\) function makes these overlapping regions important. The approximate normalization constant, \(Z_{\text{approx}}\), is then: $
Z_{\text{approx}} = \sum_{a' \in \mathcal{V}_k} \min\left(p_{\text{inf}}(a'), \frac{p_{\text{target}}(a')}{\lambda}\right) 
$

\subsubsection{Bias Correction}
\label{subsec:bias_correction}
While efficient, this top-k approximation introduces a systematic bias. Since the terms in the sum are non-negative, omitting tokens from the full vocabulary \(\mathcal{V}\) can only decrease the total sum. Therefore, our approximation is a consistent underestimation of the true value:
$$
\mathbb{E}[Z_{\text{approx}}] \le Z
$$
For k=20, . This bias could systematically alter the scale of the gradients during training. Fortunately, there is an elegant way to correct this. A key property of the framework is that the true normalization constant \(Z\) is exactly equal to the expected acceptance rate, \(\bar{\alpha}\):
$$
\bar{\alpha}
= \sum_{a \in \mathcal{V}} p_{\text{inf}}(a) \cdot 
  \min\left(1, \frac{p_{\text{target}}(a)}{\lambda \cdot p_{\text{inf}}(a)}\right)
= \sum_{a \in \mathcal{V}} 
  \min\left(p_{\text{inf}}(a), \frac{p_{\text{target}}(a)}{\lambda}\right)
= Z. 
$$ 
During the data collection phase (Algorithm~\ref{alg:jackpot}, Phase 1), we can compute an unbiased empirical estimate of \(\bar{\alpha}\) from the observed samples:
$$
\hat{\bar{\alpha}} = \frac{\text{Number of accepted samples}}{\text{Total number of proposed samples}}
$$
This gives us two estimators for \(Z\): the low-variance but biased \(Z_{\text{approx}}\), and the unbiased but higher-variance \(\hat{\bar{\alpha}}\). We can combine them to create a de-biased, low-variance estimator. We compute a batch-wide calibration factor, \(\kappa\), by dividing the empirical acceptance rate by the batch-averaged \(Z_{\text{approx}}\):
$$
\kappa = \frac{\hat{\bar{\alpha}}}{\frac{1}{B}\sum_{i=1}^{B} Z_{\text{approx}}^{(i)}}
$$
where \(B\) is the number of samples in the batch. We then apply this scalar correction to each per-token \(Z_{\text{approx}}\) value used in the loss calculation. This procedure scales our efficient top-k estimate to match the true expected value observed in practice, effectively removing the bias while retaining the computational benefits and lower variance of the top-k approach. 



\subsubsection{Efficiency Analysis}

Despite introducing two additional objectives, the rollout-model RL loss and the
on-policy distillation loss, \Sys remains highly efficient in practice.  
We summarize the key reasons below.

\textbf{(1) Extra losses do not introduce extra rollouts.}
Rollout generation is the dominant computational bottleneck in RL training for
LLMs, typically contributing over 80\% of total runtime.  
Both the rollout-model PPO loss and the distillation loss operate entirely on
the \emph{same trajectories} collected for training the policy model; hence,
no additional rollouts are required.  
Moreover, the rollout model is intentionally chosen to be more efficient than
the policy model (e.g., a smaller variant), further reducing the marginal cost
of updating~$\omega$.  
As a result, the added losses introduce only a small overhead while keeping the
overall training throughput dominated by rollout efficiency.

\textbf{(2) Minimal tensor overhead and no extra probability computation.}
\Sys is lightweight in implementation: it reuses tensors that are already
materialized within the PPO computation graph.  
No additional \texttt{log\_prob} evaluations are needed because both 
$p_{\text{ref}}$ and $p_{\text{new}}$ are produced as part of the standard 
objective~(5).  
\Sys simply reuses these probabilities for OBRS reweighting and alignment.  
Furthermore, \Sys requires no changes to vLLM—no custom operators, kernels, or
special numerical precision assumptions.  
Our implementation runs directly on standard vLLM for rollout without any system-level
modification.

\textbf{(3) No trajectory resampling is required (contrast with speculative decoding or sequence-wise rejection sampling).}
A critical distinction from speculative decoding~\cite{leviathan2023fast} is
that \Sys does \emph{not} resample the remainder of a trajectory once a token
is rejected.  
Speculative decoding discards the entire suffix of a trajectory after the first
mismatch between the draft and target models, requiring additional sampling to
regenerate the remaining tokens.  
In contrast, \Sys simply masks individual rejected tokens according to the OBRS
criterion, while keeping the rest of the trajectory intact.  
This design avoids any resampling of trajectories and significantly reduces
computational overhead.

\begin{algorithm}[h!]
\caption{The Jackpot Algorithm}
\label{alg:jackpot}
\begin{algorithmic}[1]
\REQUIRE Policies: current $p_{\boldsymbol{\text{new}}}$, reference $p_{\text{ref}}$, inference $p_{\text{inf}}$.
\REQUIRE Hyperparameters: OBRS threshold $\lambda$, PPO clip $\epsilon$, Jackpot clips $c_1, c_2$, top-$k$ count.
\STATE \textbf{Convention:} $\text{SG}(\cdot)$ denotes the stop-gradient operation.
\STATE \textbf{Implementation note:} Jackpot only reweights quantities from the \emph{standard}
       rollout and PPO/GRPO forward passes; it does \emph{not} perform extra model forward passes
       or trajectory recomputation.

\STATE \textbf{Phase 1: Efficient Rollout (Standard Generation)}
\STATE Initialize experience buffer $\mathcal{D} \leftarrow \emptyset$.
\FOR{each trajectory sampling step $t$}
    \STATE Single forward pass of $p_{\text{inf}}(\cdot \mid s_t)$, sample $a_t \sim p_{\text{inf}}(\cdot \mid s_t)$.
    \STATE From the same forward, compute and store top-$k$ log-probabilities of $p_{\text{inf}}$:
           $\mathrm{TopK}_{\text{inf}}(s_t)$.
    \STATE Store $(s_t, a_t, p_{\text{inf}}(a_t \mid s_t), \mathrm{TopK}_{\text{inf}}(s_t))$
           (plus rewards, values, etc.) in buffer $\mathcal{D}$.
\ENDFOR 
\STATE Compute advantages $\hat{A}_t$ using collected trajectories.

\STATE \textbf{Phase 2: PPO Update with Jackpot Reweighting}
\FOR{each mini-batch sampled from $\mathcal{D}$}
    \STATE \textbf{// 1. Standard PPO Computation (reused by Jackpot)}
    \STATE Forward pass $p_{\boldsymbol{\text{new}}}$ and $p_{\text{ref}}$ on the mini-batch
           to get logits, $p_{\boldsymbol{\text{new}}}(a_t \mid s_t)$,
           $p_{\text{ref}}(a_t \mid s_t)$, and $\mathrm{TopK}_{\text{new}}(s_t)$.
    \STATE Compute policy ratio:
           $r_t(\boldsymbol{\theta}) =
             \dfrac{p_{\boldsymbol{\text{new}}}(a_t \mid s_t)}
                   {p_{\text{ref}}(a_t \mid s_t)}$.
    \STATE Compute vanilla PPO objective:
           $\mathcal{L}_{\text{PPO}} =
             \min\!\big(r_t(\boldsymbol{\theta})\hat{A}_t,\;
                       \mathrm{clip}(r_t(\boldsymbol{\theta}), 1-\epsilon, 1+\epsilon)\hat{A}_t\big)$.

    \STATE \textbf{// 2. Efficient $Z$-Approximation and Bias Correction (no extra forward passes)}
    \STATE Construct approximation set
           $\mathcal{V}_k = \mathrm{TopK}_{\text{inf}}(s_t) \cup \mathrm{TopK}_{\text{new}}(s_t)$.
    \STATE Compute
           \[
             Z_{\text{approx}} =
               \sum_{x \in \mathcal{V}_k}
                 \min \left( p_{\text{inf}}(x \mid s_t),
                             \frac{p_{\boldsymbol{\text{new}}}(x \mid s_t)}{\lambda} \right).
           \]
    \STATE Estimate correction factor $\kappa$ using the OBRS-based bias-correction
           procedure described in Sec.~\ref{subsec:bias_correction}
           (e.g., from batch-level OBRS statistics).
    \STATE Set corrected normalizer $Z_t \leftarrow \kappa \cdot Z_{\text{approx}}$.

    \STATE \textbf{// 3. Jackpot Weight Calculation}
    \STATE OBRS weight:
           $w_{\text{OBRS}} =
             Z_t \cdot
             \max\!\left(\lambda,
                        \frac{p_{\boldsymbol{\text{new}}}(a_t \mid s_t)}
                             {p_{\text{inf}}(a_t \mid s_t)}\right)$.
    \STATE $\rho_{\text{jackpot}} =
           \min(w_{\text{OBRS}}, c_1) \cdot
           \min\!\left(
             \frac{p_{\text{ref}}(a_t \mid s_t)}{p_{\boldsymbol{\text{new}}}(a_t \mid s_t)},
             c_2
           \right)$.
    
    \STATE \textbf{// 4. Apply Weight to Loss}
    \STATE $\mathcal{L}_{\text{final}} = \text{SG}(\rho_{\text{jackpot}}) \cdot \mathcal{L}_{\text{PPO}}$.
    
    \STATE Update policy parameters $\boldsymbol{\text{new}}$ using gradient of $-\mathcal{L}_{\text{final}}$.
\ENDFOR
\end{algorithmic}
\end{algorithm} 

%% file: jackpot/empiricalstudies.tex
\section{Empirical Analysis} 
\label{sec:newexp} 

\begin{figure}
    \centering
    \includegraphics[width=\linewidth]{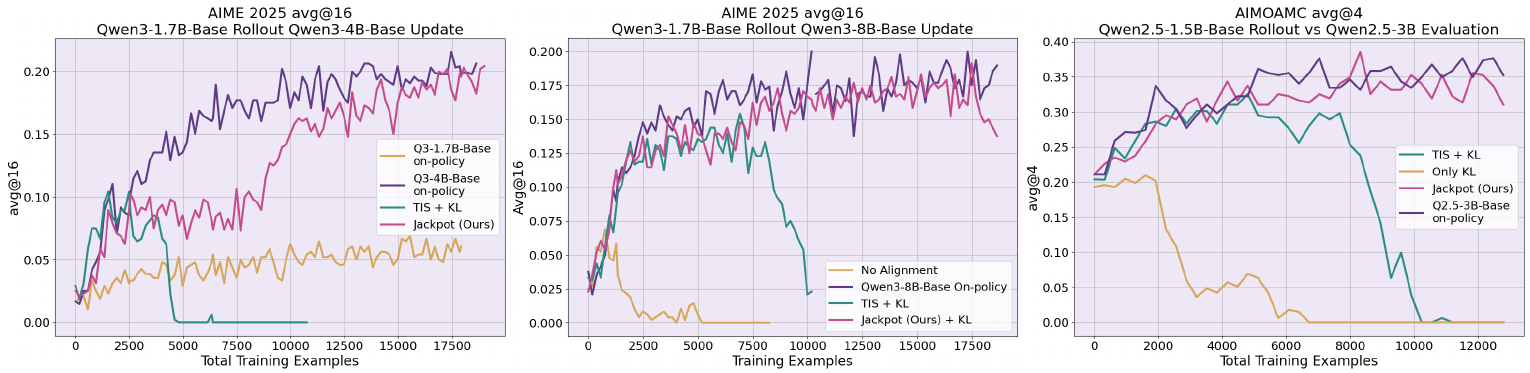} 
    \caption{\sys enables probability distribution alignment beyond existing methods. On the extreme two model joint training setting, with \sys, the smaller and weaker model is able to rollout trajectories which are used by the bigger stronger models for computing its training. We show that prior TIS methods, even added the KL, consistently suffers from unstable training across three different settings. Qwen2.5 series 1.5B and 3B, Qwen3 1.7B and 4B, and Qwen3 1.7B and 8B base models. In contrast, \sys leads to comparable performance with the large model on-policy performance.}
    \label{fig:expri} 
\end{figure} 

\Sys can be layered on top of truncated importance sampling (TIS) and enables
training in regimes where actor–policy mismatch is extremely large.  
In this section, we consider the most challenging configuration: training a 
strong, expensive policy model using a completely separate, smaller, and more 
efficient actor model for rollouts.  
Unlike scenarios where the actor model is merely stale or an approximate 
quantized version of the policy, employing a fully separate model introduces 
KL divergences that are orders of magnitude larger than those observed in 
conventional asynchronous-training setups.  
At the same time, such decoupling offers the potential for substantial rollout 
cost reductions, making this setting both practically important and 
algorithmically demanding.

\begin{table*}[t]
\centering
\caption{
Evaluation results across datasets for different actor–policy training configurations.
Mean@4 and Pass@4 are computed using four independent samples.
}
\small
\begin{adjustbox}{max width=\linewidth}
\begin{tabular}{lcccccccc}
\toprule
\textbf{Models} 
& \textbf{GSM8K} 
& \textbf{MATH-500} 
& \textbf{AMC22/23} 
& \textbf{AMC12} 
& \textbf{AIME24} 
& \textbf{AIME24} 
& \textbf{AIME25} 
& \textbf{AIME25} \\
& \textbf{Mean@4} 
& \textbf{Mean@4} 
& \textbf{Mean@4} 
& \textbf{Mean@4} 
& \textbf{Mean@4} 
& \textbf{Pass@4}
& \textbf{Mean@4} 
& \textbf{Pass@4} \\
\midrule

\multicolumn{9}{c}{
\textit{$\text{Qwen2.5-1.5B} \to \text{Qwen2.5-3B}$ (14k, 64, MATH-8K)}
} \\
\midrule
Q2.5-3B On-policy      & 0.8500 & 0.6390 & 0.3765 & 0.2611 &   --   &   --   &   --   &   --   \\
Only Reverse KL        & 0.7417 & 0.4535 & 0.2093 & 0.1407 &   --   &   --   &   --   &   --   \\
TIS + Reverse KL       & 0.8250 & 0.6045 & 0.3253 & 0.2444 &   --   &   --   &   --   &   --   \\
\rowcolor{cyan!10} 
\sys (Ours)         & 0.8428 & 0.6275 & 0.3855 & 0.2778 &   --   &   --   &   --   &   --   \\
\midrule

\multicolumn{9}{c}{
\textit{$\text{Qwen3-1.7B} \to \text{Qwen3-4B}$ (20k, 64, DeepScaleR)}
} \\
\midrule
Q3-1.7B On-policy      & 0.8380 & 0.6590 & 0.3524 & 0.2500 & 0.1000 & 0.1741 & 0.0680 & 0.1336 \\
Q3-4B On-policy        & 0.9256 & 0.8082 & 0.5813 & 0.5166 & 0.2500 & 0.3514 & 0.2156 & 0.2966 \\
TIS + Reverse KL       & 0.9121 & 0.7365 & 0.4639 & 0.3277 & 0.1333 & 0.2031 & 0.1041 & 0.1912 \\
\rowcolor{cyan!10} 
\sys (Ours)         & 0.9215 & 0.8052 & 0.5949 & 0.5388 & 0.2350 & 0.3364 & 0.2083 & 0.2778 \\
\midrule

\multicolumn{9}{c}{
\textit{$\text{Qwen3-1.7B} \to \text{Qwen3-8B}$ (15k, 64, DeepScaleR)}
} \\
\midrule
Q3-8B On-policy        & 0.9329 & 0.7950 & 0.6114 & 0.5333 & 0.2437 & 0.3385 & 0.1687 & 0.2616 \\
TIS + Reverse KL       & 0.9361 & 0.7645 & 0.5662 & 0.3722 & 0.1770 & 0.2741 & 0.1541 & 0.2223 \\
\rowcolor{cyan!10} 
\sys (Ours)         & 0.9357 & 0.8265 & 0.6204 & 0.5444 & 0.2500 & 0.3657 & 0.1916 & 0.2769 \\
\bottomrule
\end{tabular}
\end{adjustbox}

\end{table*}



\paragraph{Setup.}
We evaluate \Sys in the context of training LLMs on mathematical reasoning tasks
using the GRPO algorithm~\cite{guo2025deepseek}.  
Our experiments span three strong–weak actor–policy model pairs across two model
families and two datasets:  
(1) Qwen2.5-1.5B-Base (actor) with Qwen2.5-3B-Base (policy) trained on the MATH
dataset~\cite{hendrycks2021measuringmathematicalproblemsolving};  
(2) Qwen3-1.7B-Base with Qwen3-4B-Base trained on the DeepScaleR dataset
\cite{deepscaler2025}; and  
(3) Qwen3-1.7B-Base with Qwen3-8B-Base also trained on DeepScaleR.  
The maximum generation length is set to 8192 tokens for all experiments.

\paragraph{Results.}
Naively applying existing methods to this decoupled-actor setting leads to 
highly unstable training, even when truncated importance sampling (TIS) is used 
to constrain the distribution gap between the reference and inference models—training 
still collapses.  
In contrast, \Sys maintains stable learning for a substantially extended number 
of update steps, demonstrating stability up to 300 steps across the evaluated 
model pairs.  
These results highlight the effectiveness of OBRS-based alignment in mitigating 
distribution mismatch and improving robustness in decoupled RL training.

We show the results in the Figure. We show that the large model performance can indeed be stably trained for a large number of steps and cover a large number of examples. Surprisingly, for the Qwen3-4B-Base model, our joint training even matches the 4B model performance when training itself. \Sys consistently outperforms TIS even with reverse KL added. The above results marks the potential for significant RL training cost savings. 

\section{Ablation Studies} 
\label{sec:analysis_ablation}

\begin{figure}
    \centering
    \includegraphics[width=\linewidth]{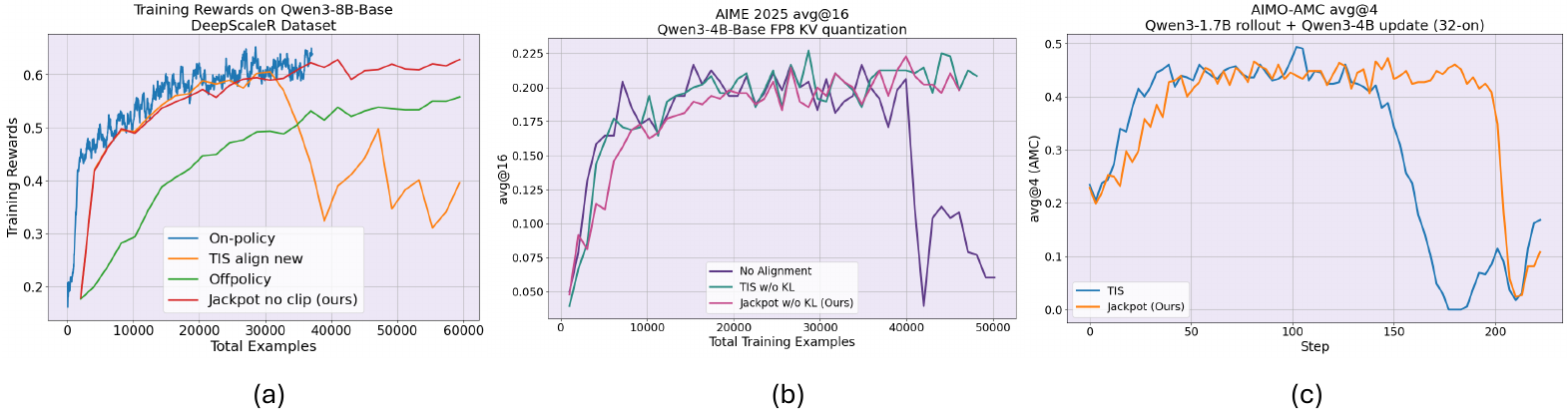} 
    \caption{(a) \sys enables removal of clipping from stale RL training. (b) \sys isn't showing improvement nor harm when actor-policy distributions are relatively close and can be sufficiently corrected by TIS. (c) When removing KL, \Sys consistently sustains longer than TIS counterparts.}  
    \label{fig:ablation} 
\end{figure} 

\subsection{When \sys isn't Effective} 
We find that \sys offers little improvement over standard methods when the distribution shift is inherently small or already well-controlled by existing techniques.

RL algorithms like PPO and GRPO employ clipping mechanisms to enforce trust regions, explicitly constraining the deviation of the current policy from the old (rollout) policy. While this design enhances stability, it inherently limits the magnitude of the update per step. Consequently, the distribution shift between the rollout policy and the current policy remains small. In these regimes, the distribution correction provided by \sys becomes less critical, resulting in performance on par with, rather than superior to, standard baselines. 

Similarly, while FP8 KV quantization during rollout introduces a larger distribution gap between the training and rollout frameworks by increasing the "off-policy" nature of the task, standard techniques like TIS effectively mitigate this instability (See Figure~\ref{fig:ablation}). Because TIS is sufficient to restore performance levels comparable to unquantized baselines, the additional distribution correction offered by \sys yields diminishing returns in this context. Detailed evaluations are demonstrated in Table~\ref{tab:no-benefit}.

\begin{table}
\centering
\caption{\sys provides little additional benefit when distribution shifts are already controlled by PPO clipping or TIS under FP8 KV quantization.}
\begin{adjustbox}{max width=\linewidth}
\begin{tabular}{lrrrrrrrr}
\toprule
\multicolumn{1}{c}{} &
\multicolumn{1}{c}{GSM8K} &
\multicolumn{1}{c}{MATH-500} &
\multicolumn{1}{c}{AMC22 \& 23} &
\multicolumn{1}{c}{AMC12 2024} &
\multicolumn{2}{c}{AIME24} &
\multicolumn{2}{c}{AIME25} \\
\cmidrule(lr){2-2}\cmidrule(lr){3-3}\cmidrule(lr){4-4}\cmidrule(lr){5-5}\cmidrule(lr){6-7}\cmidrule(lr){8-9}
\multicolumn{1}{c}{Models / Methods} &
\multicolumn{1}{c}{Mean@4} &
\multicolumn{1}{c}{Mean@4} &
\multicolumn{1}{c}{Mean@4} &
\multicolumn{1}{c}{Mean@4} &
\multicolumn{1}{c}{Mean@16} & \multicolumn{1}{c}{Pass@16} &
\multicolumn{1}{c}{Mean@16} & \multicolumn{1}{c}{Pass@16} \\
\midrule 
\rowcolor{gray!15} 
\multicolumn{9}{c}{\textbf{Qwen3-4B-Base on DeepScaleR-Preview Dataset} (rollout batch size = 2048; train batch size = 32; 64$\times$)} \\ 
\midrule 
\rowcolor{gray!15} 
Off Policy          & 92.10 & 81.55 & 60.54 & 56.11 & 27.50 & 38.36 & 23.12 & 31.36 \\
\rowcolor{cyan!10} 
\sys (Ours)      & 92.53 & 81.95 & 59.94 & 56.11 & 27.71 & 37.77 & 22.70 & 31.22 \\
\midrule 
\rowcolor{gray!15} 
\multicolumn{9}{c}{\textbf{Qwen3-4B-Base on DeepScaleR-Preview Dataset} (rollout batch size = 128; train batch size = 64; FP8 KV)} \\ 
\midrule 
TIS          & 92.68 & 83.65 & 60.84 & 53.89 & 25.83 & 36.95 & 22.70 & 29.2 \\
\rowcolor{cyan!10} 
\sys (Ours)      & 91.83 & 81.30 & 62.35 & 50.56 & 24.79 & 35.13 & 22.29 & 29.51 \\
\bottomrule 
\label{tab:no-benefit} 
\end{tabular}
\end{adjustbox} 
\end{table}

\subsection{What \sys enables} 
\label{jptenabls} 

In contrast, \sys provides substantial benefits in regimes with large policy shifts, enabling both stable training and faster convergence. See Figure~\ref{fig:ablation}.

To evaluate the benefits of \sys, we investigate a training regime where the protective constraints of PPO clipping are removed. While clipping ensures stability by limiting the policy update magnitude, it simultaneously throttles the learning speed. By relaxing these constraints, we simulate a scenario with significant distribution shifts: a setting where robust distribution correction is extremely necessary. We compare the performance of three configurations over the first 30,000 training examples: a \textbf{no-clip} baseline (off-policy without PPO clipping), a \textbf{vanilla off-policy} baseline, and our proposed \textbf{\sys} method (remove PPO clipping).

Our empirical results of Qwen3-4B-Base with rollout batch size $4096$ ($128\times$ staleness) in Table~\ref{tab:when_benefit} highlight a critical trade-off between stability and convergence speed that standard methods fail to navigate: 1. In the absence of trust-region clipping, the \textbf{no-clip} run crashes mid-training. 2. The \textbf{vanilla off-policy} baseline maintains stability throughout the training run. While it demonstrates constant, monotonic improvement in performance, it suffers from slow convergence speeds. Without an effective mechanism to correct for the distribution lag between the behavior and target policies, the model learns inefficiently. While the training does not crash with rollout batch size $2048$ ($64\times$ staleness) for either model, we still observe clear benefits of \sys in both convergence speed and final accuracy. Because all methods are evaluated under the same fixed budgets of $30$K and $50$K training examples, these higher accuracies directly indicate faster convergence in the large-batch regime. 

\sys successfully combines the benefits of the conflicting baselines without their respective drawbacks. Unlike the \textbf{no-clip} setting, \sys maintains robust stability and completes the training without crashing, effectively managing the large distribution shifts introduced by unclipped updates. Furthermore, it significantly outperforms the vanilla off-policy baseline in terms of convergence speed. By providing a more accurate distribution correction, \sys enables the model to take larger, stable optimization steps, accelerating learning in regimes where standard trust-region constraints are significantly loosened or removed.

\begin{table}
\centering
\caption{Ablation under large distribution shift (no PPO clipping): \sys enables stable and fast convergence across benchmarks. We report the best test accuracy for the first 30K training examples on Qwen3-4B-Base and 50K on Qwen3-8B-Base.}
\begin{adjustbox}{max width=\linewidth}
\begin{tabular}{lrrrrrrrr}
\toprule
\multicolumn{1}{c}{} &
\multicolumn{1}{c}{GSM8K} &
\multicolumn{1}{c}{MATH-500} &
\multicolumn{1}{c}{AMC22 \& 23} &
\multicolumn{1}{c}{AMC12 2024} &
\multicolumn{2}{c}{AIME24} &
\multicolumn{2}{c}{AIME25} \\
\cmidrule(lr){2-2}\cmidrule(lr){3-3}\cmidrule(lr){4-4}\cmidrule(lr){5-5}\cmidrule(lr){6-7}\cmidrule(lr){8-9}
\multicolumn{1}{c}{Models / Methods} &
\multicolumn{1}{c}{Mean@4} &
\multicolumn{1}{c}{Mean@4} &
\multicolumn{1}{c}{Mean@4} &
\multicolumn{1}{c}{Mean@4} &
\multicolumn{1}{c}{Mean@16} & \multicolumn{1}{c}{Pass@16} &
\multicolumn{1}{c}{Mean@16} & \multicolumn{1}{c}{Pass@16} \\
\midrule 
\rowcolor{gray!15} 
\multicolumn{9}{c}{\textbf{Qwen3-4B-Base on DeepScaleR-Preview Dataset} (rollout batch size = 2048; train batch size = 32; 64$\times$)} \\ 
\midrule 
\rowcolor{gray!15} 
On Policy           & 92.19 & 81.55 & 58.43 & 51.11 & 23.12 & 33.13 & 22.91 & 30.95 \\
Off Policy          & 88.04 & 71.15 & 39.15 & 29.44 & 13.96 & 23.03 & 11.04 & 18.61 \\
No-Clip    & \textbf{92.76} & 79.50 & \textbf{57.22} & 43.33 & 18.75 & 26.03 & 17.71 & 24.61 \\
\rowcolor{cyan!10} 
\sys (Ours)      & 92.24 & \textbf{80.05} & 53.92 & \textbf{50.00} & \textbf{20.63} & \textbf{29.48} & \textbf{18.13} & \textbf{23.63} \\
\midrule 
\rowcolor{gray!15} 
\multicolumn{9}{c}{\textbf{Qwen3-4B-Base on DeepScaleR-Preview Dataset} (rollout batch size = 4096; train batch size = 32; 128$\times$)} \\ 
\midrule 
Off Policy          & 79.70 & 60.20 & 33.00 & 24.44 & 8.00 & 15.73 & 5.00 & 11.00 \\
No-Clip    & 20.70 & 19.10 & 7.80 & 5.00 & 1.00 & 4.00 & 1.00 & 2.00 \\
\rowcolor{cyan!10} 
\sys (Ours)      & \textbf{92.00} & \textbf{80.00} & \textbf{51.20} & \textbf{47.22} & \textbf{19.16} & \textbf{24.58} & \textbf{18.52} & \textbf{25.08} \\
\midrule 
\rowcolor{gray!15} 
\multicolumn{9}{c}{\textbf{Qwen3-8B-Base on DeepScaleR-Preview Dataset} (rollout batch size = 2048; train batch size = 32; 64$\times$)} \\
\midrule 
\rowcolor{gray!15} 
On Policy           & 94.24 & 93.99 & 68.95 & 54.44 & 28.95 & 37.89 & 22.50 & 28.54 \\
Off Policy          & 91.05 & 77.15 & 50.60 & 40.00 & 18.54 & 28.67 & 14.16 & 21.98 \\
No-Clip    & 93.85 & 82.55 & 60.54 & 48.33 & 24.58 & 35.06 & 20.00 & 22.90 \\ 
\rowcolor{cyan!10} 
\sys (Ours)      & \textbf{94.01} & \textbf{83.05} & \textbf{63.55} & \textbf{54.44} & \textbf{26.87} & \textbf{36.23} & \textbf{20.41} & \textbf{26.57} \\
\bottomrule 
\label{tab:when_benefit} 
\end{tabular}
\end{adjustbox} 
\end{table} 



%% file: jackpot/conclusion.tex

\section{Conclusion}

We presented \sys, a framework that leverages Optimal Budget 
Rejection Sampling to reduce the distribution discrepancy between the rollout 
model and the policy model in reinforcement learning for large language models.  
Through a principled OBRS procedure, a unified training objective that couples 
policy and rollout updates, and an efficient system implementation enabled by 
\topk-based probability estimation and batch-level bias correction, \sys moves 
the RL training pipeline a step closer toward fully decoupling rollout 
generation from policy optimization.  
This work highlights a practical direction for enabling more flexible and 
efficient training regimes for large language models.

\section{Limitations} 

\begin{wrapfigure}{r}{0.32\linewidth}
    \includegraphics[width=\linewidth]{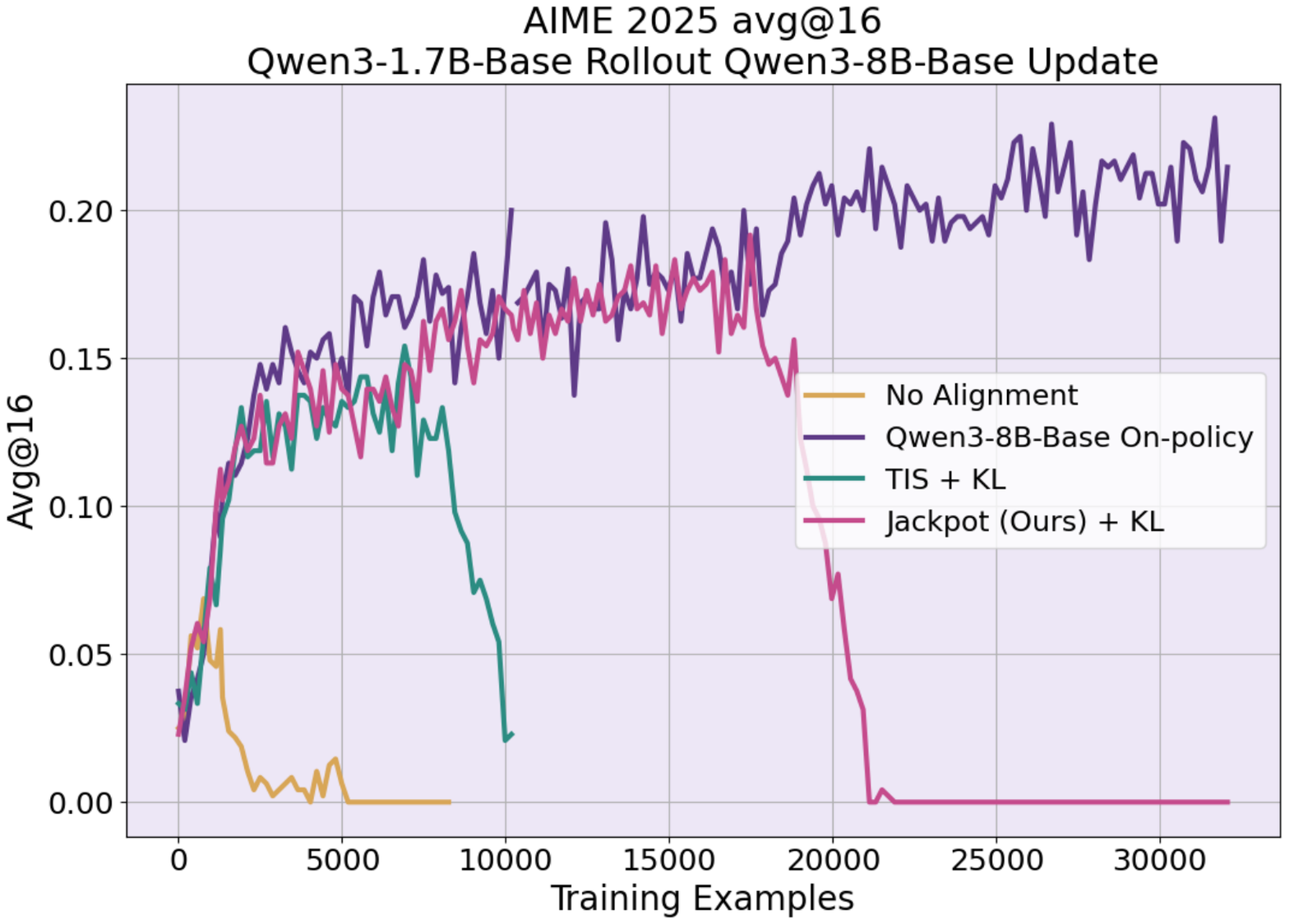} 
    \caption{\textbf{Limitation:} Even though \Sys opens up the possibility of using rollout of a much smaller model to train a bigger and more expensive model, it still leads to collapse eventually.} 
\end{wrapfigure} 

As shown in Section~\ref{sec:analysis_ablation}, when the distribution shift is
already small or adequately controlled by existing techniques, \sys yields only
minor improvements over standard baselines.  
Moreover, although \sys reduces the mismatch between the rollout model and the
policy model, using a fully separate, smaller actor model to train a large and
expensive policy does not completely eliminate the distribution gap or the
resulting training instability.  
In our experiments, we observe that training may \textbf{still} diverge after extended
optimization (e.g., beyond 300 update steps), and \sys has not yet been 
validated on larger-scale models such as 32B variants. 

Achieving a fully robust decoupling between rollout generation and policy
optimization likely requires additional mechanisms beyond OBRS.  
A promising direction is to introduce a closed-loop control scheme that adapts
the relative strength of the distillation loss and RL loss based on real-time
measurements of the KL divergence between the rollout and policy models.

%% file: jackpot/appendix.tex
\section{Analysis of OBRS}
\label{app:srs-analysis} 

This appendix provides the theoretical foundation for OBRS. We first formally define the post-rejection distribution that results from our method. We then prove two key results:
\begin{enumerate}
    \item \textbf{Optimality:} For any desired sample efficiency (i.e., acceptance rate), OBRS is the unique optimal rejection mechanism that produces a post-rejection distribution closest to the target \(p_{\boldsymbol{\theta}_{\text{new}}}\) in terms of KL divergence (Theorem~\ref{thm:budget-opt}).
    \item \textbf{Monotonic Improvement:} The post-rejection distribution monotonically approaches the target distribution as the scaling factor \(C\) increases. This guarantees that for any \(C>0\), OBRS reduces the KL divergence compared to using the proposal \(p_{\text{inf}}\) directly. (Proposition~\ref{prop:kl-monotonicity}).
\end{enumerate}

For notational clarity, we consider the distributions over tokens for a single, fixed prompt and omit the explicit conditioning. Let \(p_t \equiv p_{\boldsymbol{\theta}_{\text{new}}}\) denote the target distribution and \(p_p \equiv p_{\text{inf}}\) denote the proposal distribution.

\begin{algorithm}[h!]
\caption{Implementation of Optimal Budgeted Rejection Sampling}
\label{alg:srs}
\begin{algorithmic}[1]
\REQUIRE Proposal distribution \(p_{\text{inf}}\), Target distribution \(p_{\boldsymbol{\theta}_{\text{new}}}\)
\REQUIRE Scaling factor \(C > 0\), Number of samples to accept \(N\)
\ENSURE Set of accepted samples \(S_{\text{kept}}\)

\STATE Initialize \(S_{\text{kept}} \leftarrow \emptyset\)
\WHILE{\(|S_{\text{kept}}| < N\)}
    \STATE Sample a token \(a \sim p_{\text{inf}}(\cdot)\)
    \STATE Calculate acceptance probability \(\alpha \leftarrow \min\left(1, \frac{p_{\boldsymbol{\theta}_{\text{new}}}(a)}{C \cdot p_{\text{inf}}(a)}\right)\)
    \IF{\(U(0, 1) < \alpha\)}
        \STATE Add \(a\) to \(S_{\text{kept}}\)
    \ENDIF
\ENDWHILE
\STATE \RETURN \(S_{\text{kept}}\)
\end{algorithmic}
\end{algorithm}

\subsection{The Post-Rejection Distribution}

Recall from Definition~1 that OBRS accepts a token \(a \sim p_p(a)\) with probability \(\alpha_C(a) = \min\left(1, \frac{p_t(a)}{C \cdot p_p(a)}\right)\). The unnormalized probability of sampling and keeping a token \(a\) is \(p_p(a) \cdot \alpha_C(a)\), which simplifies to \(\min\{p_p(a), p_t(a)/C\}\).

The overall probability of accepting any token, which we denote as the acceptance rate \(Z_C\), is the sum over all possible tokens:
\[
Z_C = \sum_{a \in \mathcal{A}} \min\left\{p_p(a), \frac{p_t(a)}{C}\right\}.
\]
The distribution of the tokens that are kept, which we call the post-rejection distribution \(p_{\text{kept},C}\), is therefore:
\[
p_{\text{kept},C}(a) = \frac{\min\{p_p(a), p_t(a)/C\}}{Z_C}.
\]
Special cases clarify the role of \(C\): as \(C \to 0\), \(Z_C \to 1\) and \(p_{\text{kept},C} \to p_p\) (all tokens are kept). As \(C \to \infty\), \(Z_C \to 0\) and \(p_{\text{kept},C} \to p_t\) (perfect alignment with vanishing throughput). Standard rejection sampling is the special case where \(C \geq \max_a (p_t(a)/p_p(a))\).

\subsection{Optimality for a Fixed Acceptance Budget}
\label{app:budget-optimality}

We first establish that OBRS is not merely a heuristic but is the provably optimal strategy for minimizing distributional error given a fixed efficiency budget.

\begin{theorem}[Budgeted Optimal Acceptance]
\label{thm:budget-opt}
Fix a target acceptance rate (budget) \(z \in (0, 1]\). Among all possible token-wise acceptance rules \(\alpha: \mathcal{A} \to [0, 1]\) that satisfy the budget constraint \(\mathbb{E}_{a \sim p_p}[\alpha(a)] = \sum_a p_p(a) \alpha(a) = z\), the rule that generates a post-rejection distribution \(p_{\text{kept}}(a) \propto p_p(a) \alpha(a)\) minimizing the Kullback-Leibler (KL) divergence \(\text{KL}(p_t \,||\, p_{\text{kept}})\) is uniquely given by the OBRS rule:
\[
\alpha^\star(a) = \min\left(1, \frac{p_t(a)}{\lambda \cdot p_p(a)}\right)
\]
for some constant \(\lambda > 0\) (equivalent to \(C\)) whose value is determined by the budget \(z\).
\end{theorem}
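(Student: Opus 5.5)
The plan is to recast the problem as a concave program in the unnormalized kept mass. Set $m(a) = p_p(a)\alpha(a)$. The constraints become $0 \le m(a) \le p_p(a)$ and $\sum_a m(a) = z$, and the post-rejection distribution is $p_{\text{kept}} = m/z$. Then
\[
\text{KL}(p_t \,\|\, m/z) = \sum_a p_t(a)\log p_t(a) \;-\; \sum_a p_t(a)\log m(a) \;+\; \log z .
\]
Since $z$ is fixed, minimizing the KL is equivalent to maximizing $\Phi(m) = \sum_a p_t(a)\log m(a)$ over the polytope $\{0 \le m \le p_p,\ \sum_a m = z\}$. On the coordinates in $\operatorname{supp} p_t$ this objective is strictly concave, and the feasible set is a compact convex polytope. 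A maximizer therefore exists, and its restriction to $\operatorname{supp} p_t$ is unique.

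\textbf{Step 1: KKT conditions.} Write the Lagrangian with a multiplier $\mu$ for the budget and multipliers $\nu_a \ge 0$ for the upper bounds $m(a) \le p_p(a)$. The lower bounds are inactive wherever $p_t(a) > 0$, since $\log m \to -\infty$. Stationarity gives $p_t(a)/m(a) = \mu + \nu_a$, with $\nu_a > 0$ only where $m(a) = p_p(a)$. There are two cases:
\begin{itemize}
\item If the cap is slack, then $m(a) = p_t(a)/\mu$, and feasibility requires $p_t(a)/\mu \le p_p(a)$.
\item If the cap binds, then $m(a) = p_p(a)$, and $\nu_a \ge 0$ forces $p_t(a)/\mu \ge p_p(a)$.
\end{itemize}
Together these give $m(a) = \min\{p_p(a),\, p_t(a)/\mu\}$. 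This is exactly the OBRS rule $\alpha^\star(a) = \min\bigl(1,\, p_t(a)/(\lambda p_p(a))\bigr)$ with $\lambda = \mu$.

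Tokens with $p_t(a) = 0$ do not appear in $\Phi$. Under OBRS such tokens get $m(a) = 0$, and I will argue that this is forced. Any mass placed there takes budget away from $\operatorname{supp} p_t$ and strictly lowers $\Phi$, unless every cap on $\operatorname{supp} p_t$ is already saturated.

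\textbf{Step 2: the budget determines $\lambda$.} Define $g(\lambda) = \sum_a \min\{p_p(a),\, p_t(a)/\lambda\}$. This function is continuous and nonincreasing, tends to $0$ as $\lambda \to \infty$, and tends to $p_p(\operatorname{supp} p_t)$ as $\lambda \to 0$. Whenever $0 < g(\lambda) < p_p(\operatorname{supp} p_t)$, at least one term is on its $p_t/\lambda$ branch, so $g$ is strictly decreasing there. By the intermediate value theorem, each $z$ in that range is attained by exactly one $\lambda$. Because the problem is convex with affine constraints, Slater's condition holds and KKT is sufficient as well as necessary. Hence this $m$ is optimal, and by strict concavity it is the unique optimum.

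\textbf{Main obstacle: the boundary cases.} The delicate part is the edge of the budget range. If $z > p_p(\operatorname{supp} p_t)$, the optimizer must saturate $\operatorname{supp} p_t$ and put the leftover mass off-support, where the objective is indifferent. Uniqueness then fails and no OBRS $\lambda$ matches the budget exactly. I would handle this by stating the theorem under the assumption $\operatorname{supp} p_t \subseteq \operatorname{supp} p_p$, or equivalently $z \le p_p(\operatorname{supp} p_t)$. In that regime the case $z = 1$ corresponds to $\lambda \to 0$, that is, $\alpha \equiv 1$. I would also check that the normalization $\log z$ plays no role in the comparison, since it is the same constant for every feasible rule.
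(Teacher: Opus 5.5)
Your proposal is correct and follows the paper's argument: discard the constant $\log z$, maximize the concave objective $\sum_a p_t(a)\log\alpha(a)$ under the budget and box constraints, and read the capped rule off the KKT conditions. Your substitution $m=p_p\alpha$ is only cosmetic, and your version is more careful than the paper's in two ways: the binding-cap case gets the inequality $p_t(a)\ge\mu\,p_p(a)$ in the right direction (the paper's text reverses it), and you actually show that the budget pins down $\lambda$ via the monotone function $g$.

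The one slip is in your boundary remark. The conditions $\operatorname{supp} p_t\subseteq\operatorname{supp} p_p$ (needed for finite KL and for your lower bounds to be inactive) and $z\le p_p(\operatorname{supp} p_t)$ (needed for OBRS to meet the budget) are independent, not equivalent. You need both; for example, assume equal supports, as with softmax outputs, which then covers every $z\in(0,1]$.
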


\begin{proof}
The post-rejection distribution is \(p_{\text{kept}}(a) = p_p(a)\alpha(a)/z\). The KL divergence is:
\begin{align*}
\text{KL}(p_t \,||\, p_{\text{kept}}) &= \sum_a p_t(a) \log\frac{p_t(a)}{p_{\text{kept}}(a)} \\
&= \sum_a p_t(a) \log\frac{p_t(a)}{p_p(a)\alpha(a)/z} \\
&= \underbrace{\sum_a p_t(a) \log\frac{p_t(a)}{p_p(a)} + \log z}_{\text{constant w.r.t. } \alpha} - \sum_a p_t(a)\log\alpha(a).
\end{align*}
Minimizing \(\text{KL}(p_t \,||\, p_{\text{kept}})\) is therefore equivalent to maximizing \(\sum_a p_t(a)\log\alpha(a)\) subject to the constraints:
\begin{enumerate}
    \item \(\sum_a p_p(a) \alpha(a) = z\) (budget constraint)
    \item \(0 \le \alpha(a) \le 1\) for all \(a \in \mathcal{A}\) (valid probability constraint)
\end{enumerate}
This is a convex optimization problem. The Lagrangian is:
\[
\mathcal{L} = -\sum_a p_t(a)\log\alpha(a) + \lambda\left(\sum_a p_p(a)\alpha(a) - z\right) + \sum_a \mu_a ( \alpha(a) - 1 ) - \sum_a \nu_a \alpha(a)
\]
where \(\lambda, \mu_a, \nu_a\) are the KKT multipliers. From the stationarity condition \(\frac{\partial \mathcal{L}}{\partial \alpha(a)} = 0\), we get \(-\frac{p_t(a)}{\alpha(a)} + \lambda p_p(a) + \mu_a - \nu_a = 0\).
The complementary slackness conditions imply that if \(0 < \alpha(a) < 1\), then \(\mu_a = \nu_a = 0\), which gives \(\alpha(a) = \frac{p_t(a)}{\lambda p_p(a)}\). If \(\alpha(a) = 1\), then \(\nu_a = 0\), which requires \(\lambda p_p(a) \geq p_t(a)\). If \(\alpha(a) = 0\), this form is not well-defined, but the logic holds. Combining these cases, the optimal rule is to cap the acceptance ratio at 1:
\[ \alpha^\star(a) = \min\left(1, \frac{p_t(a)}{\lambda p_p(a)}\right). \]
The Lagrange multiplier \(\lambda\) is chosen to meet the budget constraint \(\sum_a p_p(a)\alpha^\star(a)=z\). Uniqueness follows from the strict concavity of the log objective function.
\end{proof}

\subsection{Guaranteed KL Divergence Reduction}

Next, we show that our method provides a guaranteed improvement over the proposal distribution \(p_p\) and that this improvement is monotonic in the control parameter \(C\).

\begin{proposition}[Monotonic KL Contraction]
\label{prop:kl-monotonicity}
The function \(G(C) = \text{KL}(p_t \,||\, p_{\text{kept},C})\) is non-increasing for \(C \in (0, \infty)\). It is strictly decreasing wherever the set of tokens \(\{a \mid p_t(a) < C \cdot p_p(a)\}\) has non-zero probability mass under \(p_t\).
\end{proposition}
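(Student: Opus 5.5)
The plan is to write \(G(C)\) explicitly as a function of \(C\), differentiate it on the intervals where the set of "saturated" tokens does not change, and show the derivative is non-positive. We work with a finite vocabulary \(\mathcal{A}\) and assume \(p_t \ll p_p\), so that \(G\) is finite. Tokens with \(p_t(a)=0\) contribute nothing to the KL sum.

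\textbf{Rescaling.} Set \(m_a(C) = \min\{C\,p_p(a), p_t(a)\}\). Then \(\min\{p_p(a), p_t(a)/C\} = m_a(C)/C\) and \(Z_C = S(C)/C\), where \(S(C) = \sum_a m_a(C)\). Hence \(p_{\text{kept},C}(a) = m_a(C)/S(C)\), and the factors of \(C\) cancel:
\[
G(C) = \sum_a p_t(a)\log p_t(a) \;-\; \sum_a p_t(a)\log m_a(C) \;+\; \log S(C).
\]
Each \(m_a\) is continuous and piecewise linear in \(C\). Its only kink is at \(C = p_t(a)/p_p(a)\), so \(G\) is continuous on \((0,\infty)\) and smooth away from finitely many breakpoints.

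\textbf{Derivative between breakpoints.} Fix \(C\) away from the breakpoints. Let \(A_C = \{a : p_t(a) < C\,p_p(a)\}\) be the saturated tokens, where \(m_a = p_t(a)\) does not depend on \(C\), and let \(B_C\) be its complement, where \(m_a = C\,p_p(a)\) and so \(m_a' = m_a/C\). Write \(M_A = p_t(A_C)\) and \(M_B = \sum_{a\in B_C} m_a\), so that \(S = M_A + M_B\). Then
\[
G'(C) = -\frac{1}{C}\,p_t(B_C) + \frac{1}{C}\,\frac{M_B}{S} = \frac{1}{C}\Big[(1 - M_A) \cdot (-1) + 1 - \frac{M_A}{S}\Big] = \frac{M_A}{C}\Big(1 - \frac{1}{S}\Big).
\]
Since \(m_a \le p_t(a)\) for every \(a\), we have \(S \le 1\), and therefore \(G'(C) \le 0\).

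\textbf{From the derivative to monotonicity.} Because \(G\) is continuous and piecewise \(C^1\) with finitely many breakpoints, \(G' \le 0\) off the breakpoints implies \(G\) is non-increasing on all of \((0,\infty)\). Strict decrease holds exactly when \(M_A = p_t(A_C) > 0\), which is the stated condition, together with \(S < 1\).

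If \(S = 1\), then \(p_t \le C\,p_p\) everywhere, so \(p_{\text{kept},C} = p_t\) and \(G\) is already \(0\). In that case \(G\) is constant at its minimum, and this caveat should be stated alongside the strictness claim.

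\textbf{Consequence for the first theorem.} Letting \(C \to 0\) gives \(p_{\text{kept},C} \to p_p\). Monotonicity then yields \(\mathrm{KL}(p_t\|p_{\text{kept},C}) \le \mathrm{KL}(p_t\|p_p)\) for every \(C > 0\), which is the first theorem of Section~\ref{sec:theorem}.

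\textbf{Main obstacle.} The main difficulty is the bookkeeping at the breakpoints and at the boundary cases, not the calculus. These cases are tokens with \(p_t = 0\), tokens with \(p_p = 0\) (where the KL is infinite), and the regime \(S = 1\). I would handle all of them through continuity of \(G\) and the support assumption rather than through differentiability.
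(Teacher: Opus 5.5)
Your proof is correct. Its skeleton matches the paper's: split the vocabulary by whether $p_t(a)$ exceeds $C\,p_p(a)$, differentiate $G$ on intervals where the split is fixed, and glue the pieces by continuity. The derivative itself differs, and yours is the right one.

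The paper writes the terms on $B_C=\{\rho\le C\}$ as $\log\frac{p_t(a)/C}{Z_C}$. It then asserts that only $Z_C$ depends on $C$ and concludes $G'(C)=\frac{1}{Z_C}\frac{dZ_C}{dC}=-\frac{p_t(B_C)}{C^2 Z_C}$. This drops the contribution $+\frac{p_t(B_C)}{C}$ that comes from the explicit $\log C$. The correct expression is $G'(C)=\frac{p_t(B_C)}{C}\bigl(1-\frac{1}{CZ_C}\bigr)$. This is exactly your $\frac{M_A}{C}\bigl(1-\frac{1}{S}\bigr)$, because $S=CZ_C$ and your $A_C$ is the paper's $B_C$ away from breakpoints. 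Its sign depends on $CZ_C=\sum_a\min\{C\,p_p(a),p_t(a)\}\le 1$. Your rescaling by $C$ makes that inequality immediate, whereas the paper never uses it.

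This is also why your strictness caveat is a real correction and not pedantry. For $C>\max_a p_t(a)/p_p(a)$, one has $Z_C=1/C$ and $p_{\text{kept},C}=p_t$, so $G\equiv 0$. Yet the set $\{a\mid p_t(a)<C\,p_p(a)\}$ has full $p_t$-mass there, and the paper's formula gives $G'(C)=-1/C$, which would push a KL divergence below zero. So strict decrease genuinely needs your extra condition $S<1$, i.e.\ $C<\max_a p_t(a)/p_p(a)$. This matches the hypothesis $\lambda<\max_i p_i/q_i$ in the main-text theorem.

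One minor point about your last paragraph. The limit $p_{\text{kept},C}\to p_p$ as $C\to 0$ holds only when $p_t$ has full support, which is automatic for softmax outputs. In general the limit is $p_p$ conditioned on $\mathrm{supp}\,p_t$, with $\lim_{C\to 0}G(C)=\mathrm{KL}(p_t\|p_p)+\log p_p(\mathrm{supp}\,p_t)$. That still gives the inequality you want.
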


\begin{proof}
Let \(\rho(a) = p_t(a)/p_p(a)\). We partition the vocabulary \(\mathcal{A}\) into two sets: \(A_C = \{a \mid \rho(a) > C\}\) and \(B_C = \{a \mid \rho(a) \le C\}\). On any open interval of \(C\) where this partition is constant, we can write the KL divergence \(G(C) = -\sum_a p_t(a)\log p_{\text{kept},C}(a)\) as:
\[
G(C) = -\sum_{a \in A_C} p_t(a)\log\frac{p_p(a)}{Z_C} - \sum_{a \in B_C} p_t(a)\log\frac{p_t(a)/C}{Z_C}
\]
Differentiating with respect to \(C\) (and noting that only \(Z_C\) depends on \(C\)), we get \(\frac{dG}{dC} = -\frac{d}{dC} \sum_a p_t(a) (-\log Z_C) = \frac{1}{Z_C} \frac{dZ_C}{dC}\).
The acceptance rate is \(Z_C = \sum_{a \in A_C} p_p(a) + \frac{1}{C}\sum_{a \in B_C} p_t(a)\). Its derivative is:
\[
\frac{dZ_C}{dC} = -\frac{1}{C^2}\sum_{a \in B_C} p_t(a).
\]
Therefore, \(G'(C) = \frac{1}{Z_C} \left( -\frac{1}{C^2}\sum_{a \in B_C} p_t(a) \right) \le 0\), since all terms are non-negative. The derivative is strictly negative if \(\sum_{a \in B_C} p_t(a) > 0\). As \(G(C)\) is continuous and piecewise differentiable with a non-positive derivative, it is non-increasing everywhere.
\end{proof}

\begin{corollary}[Strict Improvement over Proposal]
For any \(C > 0\), OBRS produces a distribution \(p_{\text{kept},C}\) that is strictly closer to the target distribution \(p_t\) than the original proposal distribution \(p_p\), i.e.,
\[
\text{KL}(p_t \,||\, p_{\text{kept},C}) < \text{KL}(p_t \,||\, p_p),
\]
unless \(p_p = p_t\) or \(C \leq min(\frac{p_t}{p_p})\), in which case the KL divergences are both zero .
\end{corollary}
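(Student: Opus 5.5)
The plan is to deduce the corollary from Proposition~\ref{prop:kl-monotonicity} by identifying exactly where $G(C)=\text{KL}(p_t\,\|\,p_{\text{kept},C})$ equals its value at the proposal, and then showing that $G$ drops strictly just beyond that point. Throughout write $\rho(a)=p_t(a)/p_p(a)$ and $c_0=\min_a \rho(a)$. I will assume $\text{KL}(p_t\,\|\,p_p)<\infty$, i.e.\ $\operatorname{supp} p_t\subseteq\operatorname{supp} p_p$. Otherwise $p_{\text{kept},C}$ also vanishes wherever $p_p$ does, both sides are $+\infty$, and the statement has to be read with that convention.

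\textbf{Step 1: the trivial regime.} Suppose $C\le c_0$. Then $p_t(a)/C\ge p_p(a)$ for every $a$, so $\min\{p_p(a),p_t(a)/C\}=p_p(a)$ and $Z_C=1$. Hence $p_{\text{kept},C}=p_p$ and the two divergences coincide.

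Since $\sum_a p_t=\sum_a p_p=1$, we always have $c_0\le 1$, with $c_0=1$ if and only if $p_p=p_t$. This accounts for the exceptional cases of the statement. One small correction: in the regime $C\le c_0$ the two divergences are \emph{equal}, not necessarily both zero. They are zero only when $p_p=p_t$. I will state the corollary in that corrected form.

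\textbf{Step 2: strict decrease for $C>c_0$.} Fix $C>c_0$. I will split into two cases according to whether $c_0$ is positive.

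\emph{Case $c_0>0$.} Let $a_0$ attain the minimum of $\rho$. Then $p_t(a_0)=c_0\,p_p(a_0)>0$. For every $C'\in(c_0,C]$ the token $a_0$ lies in $B_{C'}$, so $\sum_{a\in B_{C'}}p_t(a)\ge p_t(a_0)>0$. By the derivative formula in the proof of Proposition~\ref{prop:kl-monotonicity}, $G'(C')<0$ on each open subinterval of $(c_0,C]$ between consecutive breakpoints $\rho(a)$. Because the vocabulary is finite, there are only finitely many such breakpoints.

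I also need $G$ to be continuous on $[c_0,C]$, including at the breakpoints. This holds because $C\mapsto\min\{p_p(a),p_t(a)/C\}$ is continuous and positive on $\operatorname{supp} p_t$, and $Z_C$ is continuous and positive. Continuity together with a piecewise strictly negative derivative gives $G(C)<G(c_0)=\text{KL}(p_t\,\|\,p_p)$, using Step 1 at $C=c_0$.

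\emph{Case $c_0=0$.} Here some token has $p_t(a)=0<p_p(a)$. For any $C>0$ that token is rejected outright, so $Z_C<1$. The derivative argument can fail on a short initial interval, since $B_C$ may carry zero $p_t$-mass there. So I would instead argue directly:
\[
G(C)=-\sum_a p_t(a)\log\min\{p_p(a),p_t(a)/C\}-H(p_t)+\log Z_C .
\]
The sum is nonincreasing in $C$, and in the limit $C\to 0^+$ it equals the corresponding term for $p_p$. The term $\log Z_C$ is strictly negative. Combining these with monotonicity on $(0,C]$ gives the strict inequality. This decomposition is the one point I would need to check carefully.

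\textbf{Main obstacle.} The main difficulty is exactly this boundary behaviour. First, the derivative formula is only valid between breakpoints, so continuity of $G$ across them must be justified. Second, when $c_0=0$ the strictness comes from the discontinuous jump $Z_C<1$ at $C=0^+$, not from $G'<0$. I would treat that case first, via the explicit decomposition of $G(C)$ above, before invoking the proposition for the generic case.
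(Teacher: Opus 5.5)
Your overall structure matches the paper's proof: monotonicity of $G$ from the proposition, $G(C)=\mathrm{KL}(p_t\,\|\,p_p)$ for $C\le c_0$, and strict decrease just above $c_0$. Several of your refinements are real corrections of the paper:
\begin{itemize}
\item for $C\le c_0$ the divergences are equal, not both zero;
\item the statement fails when $\mathrm{KL}(p_t\,\|\,p_p)=\infty$;
\item when $c_0=0$ the paper's claim $p_{\text{kept},C}\to p_p$ as $C\to0^+$ is false, so that case does need separate treatment.
\end{itemize}

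\textbf{The gap in Case $c_0>0$.} The strictness there rests on the derivative formula from the proposition's proof, which the paper's proof also uses implicitly, and that formula is wrong. For $a\in B_C$ with $p_t(a)>0$ one has $\log p_{\text{kept},C}(a)=\log p_t(a)-\log C-\log Z_C$. So $C$ enters through $-\log C$ as well as through $Z_C$. Write $P_t(S)=\sum_{a\in S}p_t(a)$, and similarly $P_p$. Between breakpoints the correct derivative is
\[
G'(C)=\frac{P_t(B_C)}{C}\Bigl(1-\frac{1}{C\,Z_C}\Bigr).
\]
Since $C Z_C=C\,P_p(A_C)+P_t(B_C)\le P_t(A_C)+P_t(B_C)=1$, with equality iff $A_C=\emptyset$, this is still $\le 0$. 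But it vanishes for every $C\ge\max_a\rho(a)$, however much $p_t$-mass $B_C$ carries. As a check: for $p_p=p_t$ and $C>1$ the paper's formula gives $G'(C)=-1/C$, whereas in fact $G\equiv 0$.

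Your Step~2 starts from ``fix $C>c_0$'' and never excludes $p_p=p_t$. The remark $c_0=1\iff p_p=p_t$ does not cover $p_p=p_t$ with $C>1$, so as written the step concludes $G(C)<0$ there. The repair:
\begin{itemize}
\item use the corrected derivative and assume $p_p\ne p_t$;
\item note that then $c_0<1<\max_a\rho(a)$, so the interval $\bigl(c_0,\min(C,\max_a\rho(a))\bigr)$ is nonempty;
\item on that interval $a_0\in B_{C'}$ gives $P_t(B_{C'})>0$, so $G'<0$ off breakpoints;
\item continuity plus monotonicity then give $G(C)<G(c_0)$.
\end{itemize}

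\textbf{A sign error in Case $c_0=0$.} The quantity $\min\{p_p(a),p_t(a)/C\}$ is nonincreasing in $C$. Hence $S(C)=-\sum_a p_t(a)\log\min\{p_p(a),p_t(a)/C\}$ is non\emph{decreasing}, and $G(C)\le S(0^+)-H(p_t)+\log Z_C$ does not follow. Let monotonicity of $G$ do the comparison instead. Set $N=\{a:p_t(a)=0\}$ and $c_1=\min\{\rho(a):p_t(a)>0\}>0$. Every $C'\in(0,c_1]$ gives exactly $p_{\text{kept},C'}=p_p(\cdot\mid\operatorname{supp}p_t)$ and $Z_{C'}=1-p_p(N)<1$. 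Therefore
\[
G(C)\le G\bigl(\min(C,c_1)\bigr)=\mathrm{KL}(p_t\,\|\,p_p)+\log\bigl(1-p_p(N)\bigr)<\mathrm{KL}(p_t\,\|\,p_p).
\]
With these two fixes your argument is complete, and it is more careful than the paper's.
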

\begin{proof}
From Proposition~\ref{prop:kl-monotonicity}, we know that \(\text{KL}(p_t \,||\, p_{\text{kept},C})\) is non-increasing in \(C\). In the limit as \(C \to 0\), the acceptance probability \(\alpha_C(a) \to 1\) for all \(a\), meaning \(p_{\text{kept},C} \to p_p\). Therefore, \(\lim_{C\to 0} \text{KL}(p_t \,||\, p_{\text{kept},C}) = \text{KL}(p_t \,||\, p_p)\).
For any \(C>0\), as long as \(p_p \neq p_t\) and \(C > min(\frac{p_t}{p_p})\), there must exist some tokens for which \(p_t(a) < C \cdot p_p(a)\) or \(p_t(a) > C \cdot p_p(a)\), ensuring the condition for a strictly decreasing KL divergence is met over some interval \((0, C]\). Thus, \(G(C) < G(\epsilon)\) for some small \(\epsilon > 0\), which implies \(\text{KL}(p_t \,||\, p_{\text{kept},C}) < \text{KL}(p_t \,||\, p_p)\).
\end{proof}

\subsection{Practical guidance}
\begin{itemize}[leftmargin=1.5em]
\item \textbf{Setting $C$.} $C{=}1$ is a robust default: it contracts the per‑prompt KL (strictly unless $p_\text{inf}{=}p_\text{tr}$), keeps acceptance high ($Z_1$), and is $O(1)$ per token. Larger $C$ pushes $q_C$ closer to $p_\text{tr}$ but reduces throughput ($Z_C\!\le\!1/C$); use it only if variance or bias considerations demand stronger alignment.
\item \textbf{Compatibility.} The rule uses only per‑token log‑probabilities already computed by PPO/GRPO, so it introduces no new estimators and preserves gradient flow exactly as described in Algorithm \ref{alg:jackpot}.
\end{itemize} 

\section{Details of the Experiments} 
\subsection{TIS Adjustment Explanation}
\label{sec:tisadjustment}
However, the delay in updates results in staleness, which we magnify for evaluating our method. 

In this section, we push the above scenario to its limit by asking the inference batch size to be 64x and 128x the training batch size. Concretely, we choose to use a training batch size of 32. We train models on the DeepScalerR-Preview dataset \cite{deepscaler2025}, which contains 40k challenging competition math problems. We select Qwen3-4B-Base and Qwen3-8B-Base models \cite{yang2025qwen3technicalreport} to run RL on. An important baseline to our method is the Truncated Importance Sampling (TIS) as in \cite{wu2025llamarldistributedasynchronousreinforcement}. However, in the original technique is proposed only for the approximate models in the inference server. Thus, in the following loss form. 
\[
E_{a\sim\pi(\theta_{\text{old}})}[\frac{\pi(\theta_{\text{ref}})}{\pi(\theta_{\text{old,inf}})}\nabla_{\theta}\text{clip}(\frac{\pi(\theta_{\text{new}})}{\pi(\theta_{\text{ref}})}\hat{A})] 
\]
The weights update frequency is close to on-policy settings, but the gap between the approximate and efficient inference model and the training weights is the primary goal to solve. However, the original formula cannot easily be adapted for our extremely large batch setting, as there is no term regularizing the difference between $\pi_{\theta_{\text{new}}}$ and $\pi_{\theta_{\text{old}}}$. We found that a very simple trick results in a very strong baseline on top of the TIS method, that is we write it this way. 
\[
E_{a\sim\pi(\theta_{\text{old}})}[\frac{\pi(\theta_{\text{new,detached}})}{\pi(\theta_{\text{old,inf}})}\nabla_{\theta}clip(\frac{\pi(\theta_{\text{new}})}{\pi(\theta_{\text{new,detached}})}\hat{A})] 
\] where we use the detached most recent model output distribution as the term in the importance sampling. However, the consequence is also very clear, the internal clip around ratio is now 'short-circuited' or no longer useful. Nevertheless, the setting produces very strong convergence is correction over the off-policy baseline. We call it TIS-adjusted. 

To fairly compare against the baseline, we also modify our training loss as follows, effectively also 'short-circuiting' the internal ratio clip. The only difference between our setting and theirs is that we use our proposed sampling method to regularize the $\pi_{\theta_{\text{old,inf}}}$. 
\[
E_{a\sim\pi(\theta_{\text{old}})}[\frac{\pi(\theta_{\text{new,detached}})}{\pi(\theta_{\text{old,inf}})^{*\text{new}}}\nabla_{\theta}\text{clip}(\frac{\pi(\theta_{\text{new}})}{\pi(\theta_{\text{new,detached}})}\hat{A})] 
\]. 

\subsection{Full Details of Extreme Size Batch Size Experiments} 

In the following section, we provide more details about the extreme staleness experiments listed in Section~\ref{jptenabls}. The full details are presented in Table~\ref{staleone} and Table~\ref{staletwo}. 

\begin{table}[t]
\centering
\caption{Evaluation scores across benchmarks (GSM8K, MATH-500, AMC22 \& AMC23).}
\begin{adjustbox}{max width=\linewidth}
\begin{tabular}{l *{6}{r}}
\toprule
\multicolumn{1}{c}{} &
\multicolumn{2}{c}{GSM8K} &
\multicolumn{2}{c}{MATH-500} &
\multicolumn{2}{c}{AMC22 \& AMC23} \\
\cmidrule(lr){2-3}\cmidrule(lr){4-5}\cmidrule(lr){6-7}
\multicolumn{1}{c}{Models / Methods} &
\multicolumn{1}{c}{Mean@4} & \multicolumn{1}{c}{Pass@4} &
\multicolumn{1}{c}{Mean@4} & \multicolumn{1}{c}{Pass@4} &
\multicolumn{1}{c}{Mean@4} & \multicolumn{1}{c}{Pass@4} \\
\midrule
\multicolumn{7}{c}{\textbf{Qwen3-4B-Base on DeepScaler} (rollout batch size = 2048; train batch size = 32; 64$\times$)}
\\
\midrule 
On Policy                 & 92.19  & 95.988 & 81.55 & 88.00 & 58.43  & 71.76  \\
Off Policy                & 88.04  & 95.03  & 71.15 & 82.24 & 39.15  & 55.57  \\
TIS                       & 89.67  & 95.53  & 72.00 & 80.96 & 42.77  & 56.71  \\
TIS with Adjustment       & \textbf{92.76}  & \textbf{96.09} & 79.50 & 85.81 & \textbf{57.22} & \textbf{66.61}  \\
Jackpot (Ours)            & 92.24  & 95.891 & \textbf{80.05} & \textbf{85.89} & 53.916 & 65.034 \\
\addlinespace
\multicolumn{7}{c}{\textbf{Qwen3-4B-Base on DeepScaler} (rollout batch size = 4096; train batch size = 32; 128$\times$)} \\
\midrule 
Off Policy                & 79.70  & 92.96  & 60.20 & 76.60 & 33.00  & 48.446 \\
TIS with Adjustment       & 20.697 & 43.00  & 19.10 & 37.751& 7.80   & 17.83  \\
Jackpot (Ours)            & \textbf{92.00}  & \textbf{95.00}  & \textbf{80.00} & \textbf{85.50} & \textbf{51.20}  & \textbf{60.35}  \\
\addlinespace
\multicolumn{7}{c}{\textbf{Qwen3-8B-Base on DeepScaler} (rollout batch size = 2048; train batch size = 32; 64$\times$)} \\
\midrule 
On Policy                 & 94.238 & 96.78  & 93.99 & 96.65 & 28.95  & 37.89  \\
Off Policy                & 91.05  & 95.62  & 77.15 & 84.90 & 50.60  & 65.20  \\
TS with Adjustment        & 93.85  & 96.58  & 82.55 & 88.38 & 60.54  & 72.79  \\
Jackpot (Ours)            & \textbf{94.01}  & \textbf{96.63}  & \textbf{83.05} & \textbf{88.76} & \textbf{63.55}  & \textbf{74.12}  \\
\bottomrule 
\label{staleone} 
\end{tabular}
\end{adjustbox}
\end{table}

\begin{table}
\centering
\caption{Evaluation scores across benchmarks (AMC12 2024, AIME24, AIME25).}
\begin{adjustbox}{max width=\linewidth}
\begin{tabular}{l *{6}{r}}
\toprule
\multicolumn{1}{c}{} &
\multicolumn{2}{c}{AMC12 2024} &
\multicolumn{2}{c}{AIME24} &
\multicolumn{2}{c}{AIME25} \\
\cmidrule(lr){2-3}\cmidrule(lr){4-5}\cmidrule(lr){6-7}
\multicolumn{1}{c}{Models / Methods} &
\multicolumn{1}{c}{Mean@4} & \multicolumn{1}{c}{Pass@4} &
\multicolumn{1}{c}{Mean@16} & \multicolumn{1}{c}{Pass@16} &
\multicolumn{1}{c}{Mean@16} & \multicolumn{1}{c}{Pass@16} \\
\midrule
\multicolumn{7}{c}{\textbf{Qwen3-4B-Base on DeepScaler} (rollout batch size = 2048; train batch size = 32; 64$\times$)} \\
\midrule 
On Policy                 & 51.11  & 65.45  & 23.12  & 33.13  & 22.91  & 30.95  \\
Off Policy                & 29.44  & 41.802 & 13.958 & 23.03  & 11.042 & 18.607 \\
TIS                       & 31.667 & 50.844 & 11.875 & 17.561 & 11.25  & 17.278 \\
TIS with Adjustment       & 43.33  & 60.67  & 18.75  & 26.03  & 17.708 & 24.607 \\
Jackpot (Ours)            & \textbf{50.00}  & \textbf{63.00}  & \textbf{20.625} & \textbf{29.484} & \textbf{18.125} & \textbf{23.627} \\
\addlinespace
\multicolumn{7}{c}{\textbf{Qwen3-4B-Base on DeepScaler} (rollout batch size = 4096; train batch size = 32; 128$\times$)} \\
\midrule 
Off Policy                & 24.44  & 38.41  & 8.00  & 15.73  & 5.00  & 11.00  \\
TIS with Adjustment       & 5.00  & 11.00  & 1.00  & 4.00  & 1.00  & 2.00  \\
Jackpot (Ours)            & \textbf{47.22}  & \textbf{57.99} & \textbf{19.16}  & \textbf{24.58}  & \textbf{18.52}  & \textbf{25.078} \\
\addlinespace
\multicolumn{7}{c}{\textbf{Qwen3-8B-Base on DeepScaler} (rollout batch size = 2048; train batch size = 32; 64$\times$)} \\
\midrule 
On Policy                 & 54.44  & 68.47  & 28.95  & 37.89  & 22.50  & 28.542 \\
Off Policy                & 40.00  & 54.75  & 18.54  & 28.67  & 14.16  & 21.98  \\
TS with Adjustment        & 48.33  & 59.78  & 24.58  & 35.06  & 20.00  & 22.90  \\
Jackpot (Ours)            & \textbf{54.44}  & \textbf{66.09}  & \textbf{26.87}  & \textbf{36.23}  & \textbf{20.41}  & \textbf{26.57}  \\
\bottomrule 
\label{staletwo} 
\end{tabular}
\end{adjustbox}
\end{table} 

\section{Ablation Studies on Components of \Sys, Threshold, and Top-K} 
\label{ablationstudies} 

\subsection{Components' Contribution to \Sys} 
Rejection bridges the gap between the rollout and the current policy (see FP8 on-policy training without explicit jackpot reweighting; this is essentially the vanilla PPO loss with the probability fixed, and the OBRS distribution in this case matches the reference policy exactly). Because the rollout--training distribution gap is now removed, stability is significantly better than the vanilla baseline: even without TIS, training does not crash.

However, without correct importance sampling (see the huge-staleness training regime, where the importance distribution no longer matches $P_{\text{ref}}$ but instead matches the current policy), training will eventually collapse.

On the other hand, jackpot reweighting does \emph{not} solve the rollout--training gap (because under the FP8 setting it falls back to vanilla on-policy training, which again leads to a crash). But in the huge-staleness regime, where the rollout--training gap is not the main issue, jackpot reweighting combined with OBRS is effective: it performs correct importance sampling, tracks the proper target distribution, and keeps the overall training procedure stable and efficient. 


\begin{table}[h!]
\centering
\caption{FP8 on-policy training (no staleness). Best scores before crash for the vanilla baseline.}
\begin{tabular}{lcccc}
\toprule
\textbf{Experiments} & \textbf{AIME24} & \textbf{AMC} & \textbf{MATH500} & \textbf{GSM8K} \\
\midrule
Vanilla (best before crash) & 23.958 & 57.530  & 80.900  & 92.665 \\
Masking-only                 & 25.625 & 62.048 & 83.700 & 92.835 \\
Masking \& reweighting       & 26.667 & 62.651 & 82.450 & 92.305 \\
\bottomrule
\end{tabular}
\end{table}


\begin{table}[h!]
\centering
\caption{BF16 training with rollout staleness (64/2048). Best scores before crash for masking-only.}
\begin{tabular}{lcccc}
\toprule
\textbf{Experiments} & \textbf{AIME24} & \textbf{AMC} & \textbf{MATH500} & \textbf{GSM8K} \\
\midrule
Masking-only (best before crash) & 19.167 & 49.699 & 78.750 & 91.793 \\
Masking \& reweighting           & 25.625 & 63.855 & 83.800  & 92.400   \\
\bottomrule
\end{tabular}
\end{table} 

\subsection{Thresholds, C1, and C2} 

Our method involves three hyperparameters: $C_1$, $C_2$, and the rejection threshold $\lambda$. All of them are straightforward to set, and the technique is robust across a wide range of choices.

\paragraph{$C_1$.}
We follow standard truncated importance sampling (TIS) choices. Empirically, selecting $C_1 \in [2, 10]$ consistently works well, and the method is not sensitive within this interval.

\paragraph{$C_2$ (upper bound for $p_{\theta_{\mathrm{ref}}} / p_{\theta_{\mathrm{new}}}$).}
This parameter has no practical effect on performance. We set $C_2$ slightly larger than $1 + \varepsilon_{\mathrm{high}}$ (e.g., $1.28$ for DAPO), where any ratio clipped by $C_2$ would already be clipped by the PPO trust region. Thus, $C_2$ mainly serves as a conceptual safeguard for ratio stability.

\paragraph{Rejection threshold $\lambda$.}
Our method performs well across all experiments with a default setting of $\lambda = 1.0$, and we recommend choosing $\lambda$ close to this value. Increasing $\lambda$ makes the kept-token distribution closer to the target policy but increases the rejection rate. If $\lambda > 1$, it begins rejecting tokens even when the policy and inference distributions are perfectly aligned, causing overly conservative updates. The default value $c = 1.0$ guarantees full acceptance in the matched-distribution case and already reduces KL substantially while keeping a high acceptance rate.

\paragraph{Summary.}
Jackpot is stable and easy to configure: $C_1$ is robust within the typical TIS range $2\!-\!10$, $C_2$ has no practical impact once chosen above $1 + \varepsilon_{\mathrm{high}}$, and $c = 1.0$ serves as a reliable default. 

\begin{table}[h!]
\centering
\caption{Effect of $C_1$ on benchmark performance. \textbf{Experiment Setup:} Model: Qwen3-4B-Base, C2 = 3.0, threshold $c = 1.0$, response limit: 8k, mini-batch/train-batch: 64/2048, PPO clip: 0.4/0.7, 100k examples. Numbers are pass@1 accuracy.}
\begin{tabular}{lcccc}
\toprule
\textbf{Hyperparameters} & \textbf{AIME24} & \textbf{AMC} & \textbf{MATH500} & \textbf{GSM8K} \\
\midrule
$C_1 = 2$ & 26.875 & 63.855 & 82.800  & 92.267 \\
$C_1 = 3$ & 25.625 & 63.855 & 83.800  & 92.703 \\
$C_1 = 4$ & 26.042 & 65.060 & 83.500  & 92.684 \\
$C_1 = 8$ & 26.875 & 63.253 & 83.100  & 92.437 \\
\bottomrule
\end{tabular}
\end{table} 

\begin{table}[h!]
\centering
\caption{Effect of rejection threshold $c$ on benchmark performance. \textbf{Experiment Setup:} Model: Qwen3-4B-Base (target) / Qwen3-1.7B-Base (rollout); Generation length limit: 8K; Training examples: 9K. Numbers are pass@1 accuracy.}
\begin{tabular}{lcccc}
\toprule
\textbf{Threshold $c$} & \textbf{AIME24} & \textbf{AMC} & \textbf{MATH500} & \textbf{GSM8K} \\
\midrule
0.8 & 14.7 & 49.4 & 74.5 & 92.0 \\[3pt]
0.9 & 12.5 & 47.0 & 74.6 & 91.8 \\[3pt]
1.0 & 14.7 & 48.5 & 74.6 & 92.2 \\[3pt]
1.1 & 12.3 & 47.4 & 74.1 & 92.0 \\[3pt]
1.2 & 13.5 & 45.8 & 74.1 & 91.9 \\
\bottomrule
\end{tabular}
\vspace{-2mm}
\end{table}

\vspace{-2mm}
\subsection{Choice of Top-$K$ for $Z$ Approximation} 
\vspace{-2mm}
In Section~\ref{subsec:naive-concerns} we approximate the OBRS normalization constant $Z$ by summing over the union of the top-$k$ tokens under $p_{\mathrm{inf}}$ and $p_{\theta_{\text{new}}}$, yielding $Z_{\text{approx}} \le Z$. Increasing $k$ strictly improves this approximation but also increases the number of logits that must be materialized and stored. In the main experiments we therefore fix $k=20$, and here we justify this choice empirically.

We first study the direct effect of $k$ on the quality of the $Z$ estimator. For the extreme off-policy configuration, we log (i) the fraction of the true normalization captured by the top-$k$ estimator, $Z_{\text{approx}}/Z$, and (ii) the calibration factor $\kappa$ defined in Section~\ref{subsec:bias_correction}, using $k\in\{10,20,40\}$. As expected, larger $k$ improves both quantities, but with rapidly diminishing returns. Even for the smallest value $k=10$ we already capture at least $87\%$ of $Z$ at the very beginning of training, and more than $99.98\%$ once the policy has warmed up within few steps; using $k=40$ increases the captured mass only slightly (from roughly $91\%$ initially to about $100.0\%$ in the steady state). Since the union of top-$20$ tokens is always a superset of the union of top-$10$ tokens, these diagnostics imply that $k=20$ already yields an almost exact estimator of $Z$ while keeping the additional overhead modest.

We next evaluate how $k$ affects downstream performance. Table~\ref{tab:topk_ablation} reports pass@1 accuracy on our four math benchmarks for Jackpot with $k \in \{10,20,40\}$, keeping all other hyperparameters fixed. The differences across choices of $k$ are small and non-monotonic: $k=20$ performs slightly better on AMC and MATH500, while AIME24 and GSM8K show no consistent trend. Overall, the variation is comparable to run-to-run noise, and there is no evidence that pushing $k$ beyond $20$ systematically improves task performance.
\begin{table}[ht!]
\centering
\caption{Effect of top-$k$ on benchmark performance. Numbers are pass@1 accuracy.}
\label{tab:topk_ablation}
\begin{tabular}{lcccc}
\toprule
\textbf{Hyperparameters} & \textbf{AIME24} & \textbf{AMC} & \textbf{MATH500} & \textbf{GSM8K} \\
\midrule
$k=10$ & 28.958 & 61.446 & 82.650 & 92.608 \\
$k=20$ & 25.625 & 63.855 & 83.800 & 92.703 \\
$k=40$ & 27.083 & 61.446 & 83.200 & 92.418 \\
\bottomrule
\end{tabular}
\end{table}
Taken together, these results show that larger $k$ does improve the $Z$ estimator, but the gains become marginal once $k$ reaches $20$. Since the computational overhead of our method scales roughly linearly with $k$, we adopt $k=20$ as a practical default: it provides an accurate, well-calibrated estimate of $Z$ with negligible additional cost (less than $3\%$ overhead in our setup), and larger values of $k$ do not yield measurable benefits in our experiments.

%% file: main.bbl
\begin{thebibliography}{36}
\providecommand{\natexlab}[1]{#1}
\providecommand{\url}[1]{\texttt{#1}}
\expandafter\ifx\csname urlstyle\endcsname\relax
  \providecommand{\doi}[1]{doi: #1}\else
  \providecommand{\doi}{doi: \begingroup \urlstyle{rm}\Url}\fi

\bibitem[Ahmadian et~al.(2024)Ahmadian, Cremer, Gall{\'e}, Fadaee, Kreutzer, Pietquin, {\"U}st{\"u}n, and Hooker]{ahmadian2024back}
Arash Ahmadian, Chris Cremer, Matthias Gall{\'e}, Marzieh Fadaee, Julia Kreutzer, Olivier Pietquin, Ahmet {\"U}st{\"u}n, and Sara Hooker.
\newblock Back to basics: Revisiting reinforce style optimization for learning from human feedback in llms.
\newblock \emph{arXiv preprint arXiv:2402.14740}, 2024.

\bibitem[Azerbayev et~al.(2023)Azerbayev, Schoelkopf, Paster, Santos, McAleer, Jiang, Deng, Biderman, and Welleck]{azerbayev2023llemma}
Zhangir Azerbayev, Hailey Schoelkopf, Keiran Paster, Marco~Dos Santos, Stephen McAleer, Albert~Q Jiang, Jia Deng, Stella Biderman, and Sean Welleck.
\newblock Llemma: An open language model for mathematics.
\newblock \emph{arXiv preprint arXiv:2310.10631}, 2023.

\bibitem[Brown et~al.(2024)Brown, Juravsky, Ehrlich, Clark, Le, R{\'e}, and Mirhoseini]{brown2024large}
Bradley Brown, Jordan Juravsky, Ryan Ehrlich, Ronald Clark, Quoc~V Le, Christopher R{\'e}, and Azalia Mirhoseini.
\newblock Large language monkeys: Scaling inference compute with repeated sampling.
\newblock \emph{arXiv preprint arXiv:2407.21787}, 2024.

\bibitem[Espeholt et~al.(2018)Espeholt, Soyer, Munos, Simonyan, Mnih, Ward, Doron, Firoiu, Harley, Dunning, Legg, and Kavukcuoglu]{espeholt2018impalascalabledistributeddeeprl}
Lasse Espeholt, Hubert Soyer, Remi Munos, Karen Simonyan, Volodymir Mnih, Tom Ward, Yotam Doron, Vlad Firoiu, Tim Harley, Iain Dunning, Shane Legg, and Koray Kavukcuoglu.
\newblock Impala: Scalable distributed deep-rl with importance weighted actor-learner architectures, 2018.
\newblock \url{https://arxiv.org/abs/1802.01561}.

\bibitem[Fu et~al.(2025)Fu, Gao, Shen, Zhu, Mei, He, Xu, Wei, Mei, Wang, et~al.]{fu2025areal}
Wei Fu, Jiaxuan Gao, Xujie Shen, Chen Zhu, Zhiyu Mei, Chuyi He, Shusheng Xu, Guo Wei, Jun Mei, Jiashu Wang, et~al.
\newblock Areal: A large-scale asynchronous reinforcement learning system for language reasoning.
\newblock \emph{arXiv preprint arXiv:2505.24298}, 2025.

\bibitem[Guo et~al.(2025)Guo, Yang, Zhang, Song, Zhang, Xu, Zhu, Ma, Wang, Bi, et~al.]{guo2025deepseek}
Daya Guo, Dejian Yang, Haowei Zhang, Junxiao Song, Ruoyu Zhang, Runxin Xu, Qihao Zhu, Shirong Ma, Peiyi Wang, Xiao Bi, et~al.
\newblock Deepseek-r1: Incentivizing reasoning capability in llms via reinforcement learning.
\newblock \emph{arXiv preprint arXiv:2501.12948}, 2025.

\bibitem[He and Lab(2025)]{he2025nondeterminism}
Horace He and Thinking~Machines Lab.
\newblock Defeating nondeterminism in llm inference.
\newblock \emph{Thinking Machines Lab: Connectionism}, 2025.
\newblock \doi{10.64434/tml.20250910}.
\newblock https://thinkingmachines.ai/blog/defeating-nondeterminism-in-llm-inference/.

\bibitem[Hendrycks et~al.(2021)Hendrycks, Burns, Kadavath, Arora, Basart, Tang, Song, and Steinhardt]{hendrycks2021measuringmathematicalproblemsolving}
Dan Hendrycks, Collin Burns, Saurav Kadavath, Akul Arora, Steven Basart, Eric Tang, Dawn Song, and Jacob Steinhardt.
\newblock Measuring mathematical problem solving with the math dataset, 2021.
\newblock \url{https://arxiv.org/abs/2103.03874}.

\bibitem[Hu et~al.(2024)Hu, Wu, Zhu, Xianyu, Wang, Zhang, and Cao]{hu2024openrlhf}
Jian Hu, Xibin Wu, Zilin Zhu, Xianyu, Weixun Wang, Dehao Zhang, and Yu~Cao.
\newblock Openrlhf: An easy-to-use, scalable and high-performance rlhf framework.
\newblock \emph{arXiv preprint arXiv:2405.11143}, 2024.

\bibitem[Jimenez et~al.(2023)Jimenez, Yang, Wettig, Yao, Pei, Press, and Narasimhan]{jimenez2023swe}
Carlos~E Jimenez, John Yang, Alexander Wettig, Shunyu Yao, Kexin Pei, Ofir Press, and Karthik Narasimhan.
\newblock Swe-bench: Can language models resolve real-world github issues?
\newblock \emph{arXiv preprint arXiv:2310.06770}, 2023.

\bibitem[Jin et~al.(2025)Jin, Zeng, Yue, Yoon, Arik, Wang, Zamani, and Han]{jin2025search}
Bowen Jin, Hansi Zeng, Zhenrui Yue, Jinsung Yoon, Sercan Arik, Dong Wang, Hamed Zamani, and Jiawei Han.
\newblock Search-r1: Training llms to reason and leverage search engines with reinforcement learning.
\newblock \emph{arXiv preprint arXiv:2503.09516}, 2025.

\bibitem[Kiran et~al.(2021)Kiran, Sobh, Talpaert, Mannion, Al~Sallab, Yogamani, and P{\'e}rez]{kiran2021deep}
B~Ravi Kiran, Ibrahim Sobh, Victor Talpaert, Patrick Mannion, Ahmad~A Al~Sallab, Senthil Yogamani, and Patrick P{\'e}rez.
\newblock Deep reinforcement learning for autonomous driving: A survey.
\newblock \emph{IEEE transactions on intelligent transportation systems}, 23\penalty0 (6):\penalty0 4909--4926, 2021.

\bibitem[Leviathan et~al.(2023)Leviathan, Kalman, and Matias]{leviathan2023fast}
Yaniv Leviathan, Matan Kalman, and Yossi Matias.
\newblock Fast inference from transformers via speculative decoding.
\newblock In \emph{International Conference on Machine Learning}, pages 19274--19286. PMLR, 2023.

\bibitem[Li et~al.(2023)Li, Xu, Zhang, Lin, Yu, Sun, and Luo]{li2023remax}
Ziniu Li, Tian Xu, Yushun Zhang, Zhihang Lin, Yang Yu, Ruoyu Sun, and Zhi-Quan Luo.
\newblock Remax: A simple, effective, and efficient reinforcement learning method for aligning large language models.
\newblock \emph{arXiv preprint arXiv:2310.10505}, 2023.

\bibitem[Liu et~al.(2025{\natexlab{a}})Liu, Li, Fu, Wang, Liu, and Shen]{liu-li-2025-rl-collapse}
Jiacai Liu, Yingru Li, Yuqian Fu, Jiawei Wang, Qian Liu, and Yu~Shen.
\newblock When speed kills stability: Demystifying {RL} collapse from the training-inference mismatch, September 2025{\natexlab{a}}.
\newblock \url{https://richardli.xyz/rl-collapse}.

\bibitem[Liu et~al.(2025{\natexlab{b}})Liu, Yao, Zhang, Dong, Shang, and Gao]{liu2025flashrl}
Liyuan Liu, Feng Yao, Dinghuai Zhang, Chengyu Dong, Jingbo Shang, and Jianfeng Gao.
\newblock Flashrl: 8bit rollouts, full power rl, August 2025{\natexlab{b}}.
\newblock \url{https://fengyao.notion.site/flash-rl}.

\bibitem[Liu et~al.(2023)Liu, Yu, Zhang, Xu, Lei, Lai, Gu, Ding, Men, Yang, Zhang, Deng, Zeng, Du, Zhang, Shen, Zhang, Su, Sun, Huang, Dong, and Tang]{liu2023agentbench}
Xiao Liu, Hao Yu, Hanchen Zhang, Yifan Xu, Xuanyu Lei, Hanyu Lai, Yu~Gu, Hangliang Ding, Kaiwen Men, Kejuan Yang, Shudan Zhang, Xiang Deng, Aohan Zeng, Zhengxiao Du, Chenhui Zhang, Sheng Shen, Tianjun Zhang, Yu~Su, Huan Sun, Minlie Huang, Yuxiao Dong, and Jie Tang.
\newblock Agentbench: Evaluating llms as agents.
\newblock \emph{arXiv preprint arXiv: 2308.03688}, 2023.

\bibitem[Luo et~al.(2025)Luo, Tan, Wong, Shi, Tang, Roongta, Cai, Luo, Li, Popa, and Stoica]{deepscaler2025}
Michael Luo, Sijun Tan, Justin Wong, Xiaoxiang Shi, William~Y. Tang, Manan Roongta, Colin Cai, Jeffrey Luo, Li~Erran Li, Raluca~Ada Popa, and Ion Stoica.
\newblock Deepscaler: Surpassing o1-preview with a 1.5b model by scaling rl, 2025.
\newblock Notion Blog.

\bibitem[Meng et~al.(2024)Meng, Xia, and Chen]{meng2024simpo}
Yu~Meng, Mengzhou Xia, and Danqi Chen.
\newblock Simpo: Simple preference optimization with a reference-free reward.
\newblock \emph{Advances in Neural Information Processing Systems}, 37:\penalty0 124198--124235, 2024.

\bibitem[Ouyang et~al.(2025)Ouyang, Guo, Arora, Zhang, Hu, R{\'e}, and Mirhoseini]{ouyang2025kernelbench}
Anne Ouyang, Simon Guo, Simran Arora, Alex~L Zhang, William Hu, Christopher R{\'e}, and Azalia Mirhoseini.
\newblock Kernelbench: Can llms write efficient gpu kernels?
\newblock \emph{arXiv preprint arXiv:2502.10517}, 2025.

\bibitem[Qin et~al.(2025)Qin, He, Huang, Zhang, Zhao, Pang, Xu, Shan, Wu, and Zhang]{qin2025seer}
Ruoyu Qin, Weiran He, Weixiao Huang, Yangkun Zhang, Yikai Zhao, Bo~Pang, Xinran Xu, Yingdi Shan, Yongwei Wu, and Mingxing Zhang.
\newblock Seer: Online context learning for fast synchronous llm reinforcement learning.
\newblock \emph{arXiv preprint arXiv:2511.14617}, 2025.

\bibitem[Rafailov et~al.(2023)Rafailov, Sharma, Mitchell, Manning, Ermon, and Finn]{rafailov2023direct}
Rafael Rafailov, Archit Sharma, Eric Mitchell, Christopher~D Manning, Stefano Ermon, and Chelsea Finn.
\newblock Direct preference optimization: Your language model is secretly a reward model.
\newblock \emph{Advances in neural information processing systems}, 36:\penalty0 53728--53741, 2023.

\bibitem[Sadhukhan et~al.(2025)Sadhukhan, Chen, Zheng, Zhou, Strubell, and Chen]{sadhukhan2025kinetics}
Ranajoy Sadhukhan, Zhuoming Chen, Haizhong Zheng, Yang Zhou, Emma Strubell, and Beidi Chen.
\newblock Kinetics: Rethinking test-time scaling laws.
\newblock \emph{arXiv preprint arXiv:2506.05333}, 2025.

\bibitem[Schulman et~al.(2017)Schulman, Wolski, Dhariwal, Radford, and Klimov]{schulman2017proximalpolicyoptimizationalgorithms}
John Schulman, Filip Wolski, Prafulla Dhariwal, Alec Radford, and Oleg Klimov.
\newblock Proximal policy optimization algorithms, 2017.
\newblock \url{https://arxiv.org/abs/1707.06347}.

\bibitem[Shao et~al.(2024)Shao, Wang, Zhu, Xu, Song, Bi, Zhang, Zhang, Li, Wu, et~al.]{shao2024deepseekmath}
Zhihong Shao, Peiyi Wang, Qihao Zhu, Runxin Xu, Junxiao Song, Xiao Bi, Haowei Zhang, Mingchuan Zhang, YK~Li, Yang Wu, et~al.
\newblock Deepseekmath: Pushing the limits of mathematical reasoning in open language models.
\newblock \emph{arXiv preprint arXiv:2402.03300}, 2024.

\bibitem[Sheng et~al.(2025)Sheng, Zhang, Ye, Wu, Zhang, Zhang, Peng, Lin, and Wu]{sheng2025hybridflow}
Guangming Sheng, Chi Zhang, Zilingfeng Ye, Xibin Wu, Wang Zhang, Ru~Zhang, Yanghua Peng, Haibin Lin, and Chuan Wu.
\newblock Hybridflow: A flexible and efficient rlhf framework.
\newblock In \emph{Proceedings of the Twentieth European Conference on Computer Systems}, pages 1279--1297, 2025.

\bibitem[Team et~al.(2025)Team, Shen, Li, Hu, Jing, Chen, Huang, Zhang, Yang, Lin, et~al.]{team2025every}
Ling Team, Anqi Shen, Baihui Li, Bin Hu, Bin Jing, Cai Chen, Chao Huang, Chao Zhang, Chaokun Yang, Cheng Lin, et~al.
\newblock Every step evolves: Scaling reinforcement learning for trillion-scale thinking model.
\newblock \emph{arXiv preprint arXiv:2510.18855}, 2025.

\bibitem[Together-AI()]{Atlas}
Together-AI.
\newblock Adaptive-learning speculator system (atlas): A new paradigm in llm inference via runtime-learning accelerators.
\newblock \url{https://www.together.ai/blog/adaptive-learning-speculator-system-atlas}.

\bibitem[Verine et~al.(2024)Verine, Pydi, Negrevergne, and Chevaleyre]{verine2024optimal}
Alexandre Verine, Muni~Sreenivas Pydi, Benjamin Negrevergne, and Yann Chevaleyre.
\newblock Optimal budgeted rejection sampling for generative models.
\newblock In \emph{International Conference on Artificial Intelligence and Statistics}, pages 3367--3375. PMLR, 2024.

\bibitem[von Werra et~al.(2020)von Werra, Belkada, Tunstall, Beeching, Thrush, Lambert, Huang, Rasul, and Gallouédec]{vonwerra2022trl}
Leandro von Werra, Younes Belkada, Lewis Tunstall, Edward Beeching, Tristan Thrush, Nathan Lambert, Shengyi Huang, Kashif Rasul, and Quentin Gallouédec.
\newblock Trl: Transformer reinforcement learning.
\newblock \url{https://github.com/huggingface/trl}, 2020.

\bibitem[Wu et~al.(2025{\natexlab{a}})Wu, Wang, Tang, Ding, Helenowski, Tan, Xu, Gowda, Chen, Zhu, Tang, Qian, Zhu, and Hou]{wu2025llamarldistributedasynchronousreinforcement}
Bo~Wu, Sid Wang, Yunhao Tang, Jia Ding, Eryk Helenowski, Liang Tan, Tengyu Xu, Tushar Gowda, Zhengxing Chen, Chen Zhu, Xiaocheng Tang, Yundi Qian, Beibei Zhu, and Rui Hou.
\newblock Llamarl: A distributed asynchronous reinforcement learning framework for efficient large-scale llm training, 2025{\natexlab{a}}.
\newblock \url{https://arxiv.org/abs/2505.24034}.

\bibitem[Wu et~al.(2025{\natexlab{b}})Wu, Wang, Tang, Ding, Helenowski, Tan, Xu, Gowda, Chen, Zhu, et~al.]{wu2025llamarl}
Bo~Wu, Sid Wang, Yunhao Tang, Jia Ding, Eryk Helenowski, Liang Tan, Tengyu Xu, Tushar Gowda, Zhengxing Chen, Chen Zhu, et~al.
\newblock Llamarl: A distributed asynchronous reinforcement learning framework for efficient large-scale llm trainin.
\newblock \emph{arXiv preprint arXiv:2505.24034}, 2025{\natexlab{b}}.

\bibitem[Yang et~al.(2025)Yang, Li, Yang, Zhang, Hui, Zheng, Yu, Gao, Huang, Lv, Zheng, Liu, Zhou, Huang, Hu, Ge, Wei, Lin, Tang, Yang, Tu, Zhang, Yang, Yang, Zhou, Zhou, Lin, Dang, Bao, Yang, Yu, Deng, Li, Xue, Li, Zhang, Wang, Zhu, Men, Gao, Liu, Luo, Li, Tang, Yin, Ren, Wang, Zhang, Ren, Fan, Su, Zhang, Zhang, Wan, Liu, Wang, Cui, Zhang, Zhou, and Qiu]{yang2025qwen3technicalreport}
An~Yang, Anfeng Li, Baosong Yang, Beichen Zhang, Binyuan Hui, Bo~Zheng, Bowen Yu, Chang Gao, Chengen Huang, Chenxu Lv, Chujie Zheng, Dayiheng Liu, Fan Zhou, Fei Huang, Feng Hu, Hao Ge, Haoran Wei, Huan Lin, Jialong Tang, Jian Yang, Jianhong Tu, Jianwei Zhang, Jianxin Yang, Jiaxi Yang, Jing Zhou, Jingren Zhou, Junyang Lin, Kai Dang, Keqin Bao, Kexin Yang, Le~Yu, Lianghao Deng, Mei Li, Mingfeng Xue, Mingze Li, Pei Zhang, Peng Wang, Qin Zhu, Rui Men, Ruize Gao, Shixuan Liu, Shuang Luo, Tianhao Li, Tianyi Tang, Wenbiao Yin, Xingzhang Ren, Xinyu Wang, Xinyu Zhang, Xuancheng Ren, Yang Fan, Yang Su, Yichang Zhang, Yinger Zhang, Yu~Wan, Yuqiong Liu, Zekun Wang, Zeyu Cui, Zhenru Zhang, Zhipeng Zhou, and Zihan Qiu.
\newblock Qwen3 technical report, 2025.
\newblock \url{https://arxiv.org/abs/2505.09388}.

\bibitem[Zheng et~al.(2025{\natexlab{a}})Zheng, Liu, Li, Chen, Yu, Gao, Dang, Liu, Men, Yang, et~al.]{zheng2025group}
Chujie Zheng, Shixuan Liu, Mingze Li, Xiong-Hui Chen, Bowen Yu, Chang Gao, Kai Dang, Yuqiong Liu, Rui Men, An~Yang, et~al.
\newblock Group sequence policy optimization.
\newblock \emph{arXiv preprint arXiv:2507.18071}, 2025{\natexlab{a}}.

\bibitem[Zheng et~al.(2025{\natexlab{b}})Zheng, Zhao, and Chen]{zheng2025prosperity}
Haizhong Zheng, Jiawei Zhao, and Beidi Chen.
\newblock Prosperity before collapse: How far can off-policy rl reach with stale data on llms?
\newblock \emph{arXiv preprint arXiv:2510.01161}, 2025{\natexlab{b}}.

\bibitem[Zheng et~al.(2025{\natexlab{c}})Zheng, Zhou, Bartoldson, Kailkhura, Lai, Zhao, and Chen]{zheng2025act}
Haizhong Zheng, Yang Zhou, Brian~R Bartoldson, Bhavya Kailkhura, Fan Lai, Jiawei Zhao, and Beidi Chen.
\newblock Act only when it pays: Efficient reinforcement learning for llm reasoning via selective rollouts.
\newblock \emph{arXiv preprint arXiv:2506.02177}, 2025{\natexlab{c}}.

\end{thebibliography}
